\documentclass{article}
\usepackage{iclr2027_conference,times}

\usepackage{hyperref}
\usepackage{url}
\usepackage{microtype}
\usepackage{graphicx}
\usepackage{amsmath,amssymb,amsfonts,amsthm,mathtools,bm}
\usepackage{booktabs}
\usepackage{multirow}
\usepackage{enumitem}
\usepackage{xcolor}
\usepackage{booktabs}
\usepackage{makecell}
\usepackage{siunitx}
\usepackage{changes}
\usepackage{comment}

\hypersetup{
    colorlinks,
    linkcolor={red!50!black},
    citecolor={green!50!black},
    urlcolor={blue!80!black}
}

\usepackage{amsmath,amsfonts,bm}

\def\eqref#1{equation~\ref{#1}}
\def\Eqref#1{Equation~\ref{#1}}

\def\1{\bm{1}}

\DeclareMathAlphabet{\mathsfit}{\encodingdefault}{\sfdefault}{m}{sl}
\SetMathAlphabet{\mathsfit}{bold}{\encodingdefault}{\sfdefault}{bx}{n}

\newtheorem{theorem}{Theorem}
\newtheorem{proposition}[theorem]{Proposition}

\theoremstyle{definition}
\newtheorem{definition}[theorem]{Definition}
\newtheorem{remark}[theorem]{Remark}

\newcommand{\btheta}{\bm{\theta}}

\newcommand{\scmap}{f_t(\btheta)}
\newcommand{\llogit}{\ell^{\mathrm{logit}}}
\newcommand{\lnce}{\ell^{\mathrm{InfoNCE}}}
\newcommand{\ldist}{\ell^{\mathrm{dist}}}
\newcommand{\lmrg}{\ell^{\mathrm{mrg}}}

\makeatletter
\@ifundefined{proposition}{\newtheorem{proposition}{Proposition}}{}
\@ifundefined{theorem}{\newtheorem{theorem}{Theorem}}{}
\@ifundefined{definition}{}{}
\@ifundefined{remark}{}{}
\makeatother

\definechangesauthor[name={FJ}, color=blue]{FJ}

\graphicspath{{../docs/figures/}{figures/}}

\title{Contrastive Linearized Laplace:
Loss Geometry, Valid Repairs, and Reliable Cores}

\author{%
  F. J. Sáez-Maldonado \\
  University of Granada \\
  {\footnotesize\texttt{fjaviersaezm@ugr.es}}
  \And
  Luis A. Ortega \\
  Aalborg University \\
  {\footnotesize\texttt{laoa@cs.aau.dk}}
  \And
  Andrés R. Masegosa \\
  Aalborg University \\
  {\footnotesize\texttt{arma@cs.aau.dk}}
  \And
  Thomas D. Nielsen \\
  Aalborg University \\
  {\footnotesize\texttt{tdn@cs.aau.dk}}
}
\iclrfinalcopy

\begin{document}
\maketitle

\begin{abstract}
While contrastive learning is widely used to learn deterministic representations, it does not inherently provide estimates of uncertainty. Post-hoc uncertainty estimation methods, such as the Linearized Laplace approximation (LLA), offer a way to quantify uncertainty without retraining the encoder. However, we show that directly using the LLA induced distribution over the contrastive embedding does not yield reliable pointwise uncertainty estimates for downstream tasks: embedding variance alone does not reflect whether decisions such as nearest-neighbor retrieval or cluster assignment are stable. To address this limitation, we develop a post-hoc framework based on LLA that propagates encoder uncertainty through the downstream task. We characterize the curvature of contrastive objectives and introduce positive-semidefinite repairs where needed to obtain valid local Gaussian approximations without retraining the encoder. We test the proposed approach in metagenomic binning and image classification tasks. For metagenomic binning, posterior coassignment probabilities support consensus partitions and the selection of reliable cores, leading to improvements over existing baselines. Additional image classification experiments show calibration improvements on standard benchmarks and real histopathology datasets. 
\end{abstract}

\section{Introduction}

Contrastive learning is the standard way to train representations for retrieval, verification, clustering, metric learning, and self-supervised pretraining \citep{Hadsell2006Dimensionality,chen2020simple,Khosla2020SupCon,oord2018representation}. Unlike ordinary supervised prediction, these methods are trained via learning \emph{relational} decisions: whether two examples match, which candidate is closest, or which samples belong to the same cluster. In applications such as image retrieval, genome binning, and pathology review, it is therefore important not only to obtain accurate representations, but also to identify which of these relations can be trusted. Standard contrastive encoders provide deterministic embeddings and scores,
but no direct measure of how robust the resulting relational decisions are.

Bayesian inference provides a natural framework for this problem: uncertainty in the shared encoder induces joint uncertainty over all embeddings and their relations. Among approximate methods, the Linearized Laplace Approximation (LLA) is attractive because it constructs a tractable Gaussian posterior around an already trained network \citep{Daxberger2021LaplaceRedux,Deng2022AcceleratedLLA}. However, applying LLA to contrastive learning raises two difficulties. First, contrastive losses act on relational outputs (such as pair scores, embedding differences, or candidate logits) rather than on independent example-wise predictions. Hence, the resulting precision need not be positive semidefinite. Thus, a linearized encoder can yield an invalid Gaussian posterior. Second, even with a valid posterior, uncertainty in an individual embedding or pair score does not necessarily reflect uncertainty in the downstream decision, which may depend jointly on several competing relations and their posterior correlations. The key distinction is between uncertainty in a representation and uncertainty in a decision. An embedding can move substantially in the representation space (high \textit{representation uncertainty}) while remaining grouped with the same neighbors (low \textit{decision uncertainty}). Conversely, a small perturbation can change its assignment when it lies close to a boundary. 

We address the first problem by showing that posterior validity is determined by the geometry of the contrastive loss: logit-Bernoulli and InfoNCE objectives have positive-semidefinite (PSD)  output curvature, whereas distance-Bernoulli and classical margin losses can exhibit negative  curvature for negative pairs. For the latter, we introduce PSD curvature constructions that make the resulting local posterior well defined. This provides a unified treatment of pairwise and candidate-set contrastive objectives and extends previous observations for Bayesian metric learning \citep{warburg2023bayesian}.

We address the second problem by distinguishing \emph{representation uncertainty} from \emph{decision uncertainty}. Pointwise embedding variance is a valid posterior statistic, but it ignores decision-relevant directions and competing relations. Pairwise score variance is closer to retrieval and verification, but remains a marginal quantity. Decisions such as nearest-neighbor retrieval, graph construction, and clustering depend jointly on several scores, their margins, and their covariance. We therefore propagate representation uncertainty through the downstream rule and measure the stability of the decision uncertainty. This yields \emph{reliable cores}: subsets of a downstream structure whose assignments remain stable under posterior perturbations, summarized through quality--coverage tradeoffs.

Our contributions are as follows. First, we formulate a relational LLA for pairwise and candidate-set contrastive losses and characterize posterior validity through their output-space geometry. Second, we provide PSD constructions for distance-based objectives with indefinite curvature. Third, we formalize why pointwise embedding variance and marginal pair-score variance are insufficient for general relational decisions. Finally, we propose posterior-sampled reliable cores that quantify uncertainty after applying the downstream rule rather than through a surrogate variance threshold. We demonstrate state-of-the-art performance on DNABERT-S-style genome clustering \citep{Zhou2024DNABERTS} benchmarks. We further perform experiments in image classification on EMNIST \citep{Cohen2017EMNIST}, CIFAR-100 \citep{krizhevsky2009learning} and SICAPv2 pathology dataset \citep{SilvaRodriguez2020SICAPv2}.

\section{Related Work}
\label{sec:related_work}

Uncertainty in metric and contrastive learning is often modeled through
distribution-valued embeddings, including instance and face embeddings,
triplet learning, cross-modal retrieval, video representations, and
InfoNCE-based objectives
\citep{Oh2019HedgedEmbedding,Shi2019ProbabilisticFaceEmbeddings,
Warburg2021BayesianTriplet,Chun2021ProbabilisticCrossModal,
Park2022ProbabilisticVideo,Kirchhof2023ProbabilisticContrastive}.
Other work estimates representation reliability or uses uncertainty to
select and reweight contrastive pairs during training
\citep{Park2024RepresentationReliability,Wang2023UCoL,
Devalraju2025UncertaintyGuidedMetric,Chang2025UncertaintyAwareClustering}.
These methods primarily attach uncertainty to individual representations
or use it as a training signal. We instead infer an epistemic posterior
over the shared encoder, inducing joint uncertainty over its relational
outputs.

Most closely related, \citet{warburg2023bayesian} applies
Laplace inference to pairwise metric learning and addresses invalid local
curvature, while \citet{Cheng2025DeepProbabilisticBinaryEmbedding} uses
weight-space Laplace inference for cross-modal retrieval. We generalize
this perspective to a relational score map covering both pairwise and
candidate-set objectives and characterize posterior validity through
output-space curvature. At the downstream level, our approach is related
to posterior-sampled retrieval uncertainty
\citep{Gomez2025OverTopOne}, risk-controlled retrieval
\citep{Cai2025RiskControlledRetrieval}, selective prediction
\citep{ElYaniv2010Selective,Geifman2017SelectiveClassification}, and
selective clustering \citep{Marandon2025FalseSelection}. Rather than
thresholding an isolated representation or score, we propagate posterior
encoder samples through the downstream relational rule and measure the
stability of the resulting structure.

General Laplace and LLA methods are reviewed in
Appendix~\ref{app:related_work}.

\section{Laplace Approximations and Contrastive Objectives}
\label{sec:background}

Contrastive learning learns representations by comparing examples rather
than predicting an individual target for each input. Given an encoder
$h_{\btheta}:\mathcal{X}\rightarrow\mathbb{R}^d$, an input
$\mathbf{x}_i$ is mapped to an embedding
$\mathbf{z}_i=h_{\btheta}(\mathbf{x}_i)$. Training encourages
embeddings of \emph{positive} examples to be similar and those of
\emph{negative} examples to be dissimilar, where positive and negative
relations may arise from labels, augmentations, or other notions of
semantic similarity.

We denote by $t$ a contrastive training unit containing the examples and
relations required to compute one loss term. Associated with $t$ is a
score map $\scmap\in\mathbb{R}^{K_t}$ and a loss
$\ell_t:\mathbb{R}^{K_t}\rightarrow\mathbb{R}$, giving the training
objective $\sum_t\ell_t(\scmap)$. For example, in
InfoNCE~\citep{oord2018representation}, $t$ contains an anchor $i$, a
candidate set $C_i$, and a positive candidate $p(i)$. Its scores are the
anchor--candidate logits
$f_{t,j}=s(\mathbf{z}_i,\mathbf{z}_j)/\tau$, $j\in C_i$, and
\begin{equation}
    \lnce_t(f_t)
    =
    -f_{t,p(i)}
    +
    \log\sum_{j\in C_i}\exp(f_{t,j}),
\end{equation}
where $s$ is a similarity function and $\tau>0$ is the temperature.

\paragraph{Contrastive Linearized Laplace Approximation}

Consider the score map
$\scmap$ with loss
$\ell_t(\scmap)$. Under a Gaussian parameter prior
$\mathcal{N}(\btheta_0,\boldsymbol{\Lambda}_0^{-1})$, the MAP
estimate $\btheta^\star$ minimizes
\begin{equation}
    \mathcal{L}(\btheta)
    =
    \sum_t \ell_t(\scmap)
    +
    \frac{1}{2}
    (\btheta-\btheta_0)^\top
    \boldsymbol{\Lambda}_0
    (\btheta-\btheta_0).
\end{equation}
The Laplace approximation constructs
$q(\btheta)
=\mathcal{N}(\btheta^\star,\boldsymbol{\Sigma})$
using the local curvature of this objective. For each loss term,
\begin{equation}
    \nabla_{\btheta}^2\ell_t
    =
    \mathbf{J}_t^\top\mathbf{H}_t\mathbf{J}_t
    +
    \sum_{k=1}^{K_t}
    [\nabla_{f_t}\ell_t]_k
    \nabla_{\btheta}^2 f_{t,k},
    \label{eq:hessian_decomposition}
\end{equation}
where
$\mathbf{J}_t=\partial f_t/\partial\btheta$ and
$\mathbf{H}_t=\nabla_{f_t}^2\ell_t$, evaluated at
$\btheta^\star$. The Linearized Laplace Approximation (LLA) replaces the model locally by
$\scmap
\approx f_t(\btheta^\star)
+\mathbf{J}_t(\btheta-\btheta^\star)$.
The second term in~\Eqref{eq:hessian_decomposition} therefore vanishes,
yielding the Generalized Gauss--Newton (GGN) precision
\begin{equation}
    \boldsymbol{\Sigma}^{-1}
    =
    \boldsymbol{\Lambda}_0
    +
    \sum_t
    \mathbf{J}_t^\top\mathbf{H}_t\mathbf{J}_t.
    \label{eq:lla_gnn}
\end{equation}

\paragraph{Validity of the defined precision} 
When $\mathbf{H}_t\succeq0$, each data contribution is positive
semidefinite and~\Eqref{eq:lla_gnn} defines a valid Gaussian precision
given a positive definite prior or damping. Thus, posterior validity depends directly on the curvature of the
contrastive loss with respect to its relational scores. If $\boldsymbol{\Lambda}_0\succeq0$ and $\mathbf{H}_t\succeq0$ for every
contrastive unit $t$, then $\boldsymbol{\Sigma}^{-1}\succeq0$. If,
additionally, $\boldsymbol{\Lambda}_0\succ0$ or positive damping is
added, then $\boldsymbol{\Sigma}^{-1}\succ0$ and defines a proper local
Gaussian posterior~\citep{Daxberger2021LaplaceRedux}.

For InfoNCE, choosing $\scmap$ as the vector of anchor--candidate logits
satisfies this condition. This can be proved by simply writing
$\mathbf{p}_t=\operatorname{softmax}(\scmap)$,
\begin{equation}
    \mathbf{H}_t
    =
    \operatorname{diag}(\mathbf{p}_t)
    -
    \mathbf{p}_t\mathbf{p}_t^\top
    \succeq0,
\end{equation}
which is the covariance matrix of a categorical distribution over the
candidate set. Hence InfoNCE yields a PSD GGN contribution while
retaining the local coupling between candidates.

Positive semidefiniteness is not guaranteed for other contrastive
objectives. Consider the pairwise margin loss \citep{Hadsell2006Dimensionality}
\begin{equation}
\label{eq:margin:loss}
    \scmap=\mathbf{z}_i-\mathbf{z}_j,
    \qquad
    \lmrg_t(\scmap,y_t)
    =
    y_t\|\scmap\|_2^2
    +(1-y_t)[m-\|\scmap\|_2]_+^2,
\end{equation}
where $y_t=1$ denotes a positive pair and $y_t=0$ a negative pair. For a negative pair within the margin, $\mathbf{H}_t$ is indefinite due to negative tangential curvature; see Appendix~\ref{sec:geometry} for further details. More generally, distance- and margin-based
contrastive losses need not be convex in their relational scores. For these cases, we consider PSD repairs that either remove negative
local curvature (spectral repair) or retain the informative radial geometry of
distance-based losses (radial repair); their precise construction is given in
Appendix~\ref{sec:geometry}. We additionally study the Empirical Fisher
as a PSD alternative in Appendix~\ref{app:ef}. Unless stated otherwise,
our experiments use the \emph{spectral} repair of the GGN approximation, with all alternatives
compared in Appendix~\ref{app:ablations}.

\section{Marginal Variance Is Not Downstream Uncertainty}
\label{sec:variance}

\begin{figure}
    \centering
    \includegraphics[width=0.98\linewidth]{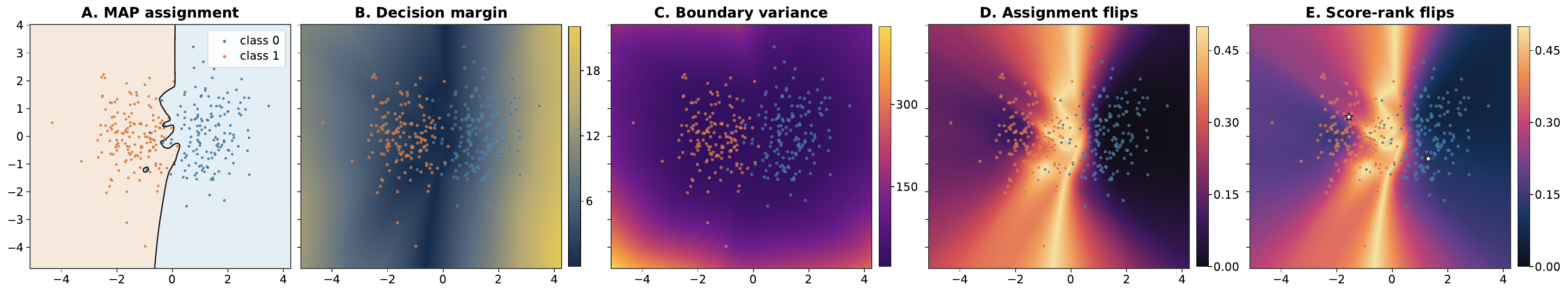}
    \caption{Synthetic two-blob example showing that posterior variance is not itself downstream uncertainty. Assignment uncertainty depends on the decision margin and variance along the boundary normal, while score-rank uncertainty depends on the posterior variance of a score difference.}
    \label{fig:synthetic:toy}
\end{figure}

A repaired LLA defines a Gaussian posterior over the encoder parameters, $q(\btheta) = \mathcal{N}(\btheta^\star,\boldsymbol{\Sigma}_\theta).
$
Contrastive LLA defines this posterior by locally linearizing the relational score map with respect to the encoder parameters. The resulting parameter posterior can in turn be propagated to the representation space to characterize uncertainty in individual embeddings. Specifically, for an input $\mathbf{x}$, we linearize the embedding network around the MAP parameters,
\[
h_{\btheta}(\mathbf{x})
\approx
h_{\btheta^\star}(\mathbf{x})
+
\mathbf{J}_{\mathbf{x}}
(\btheta-\btheta^\star),
\]
which induces a Gaussian distribution in embedding space,
\[
h_{\btheta}(\mathbf{x})\mid\mathcal D
\sim
\mathcal N(\boldsymbol{\mu}_{\mathbf{x}},\boldsymbol{\Sigma}_{\mathbf{x}}),
\qquad
\boldsymbol{\mu}_{\mathbf{x}}=h_{\btheta^\star}(\mathbf{x}),
\qquad
\boldsymbol{\Sigma}_{\mathbf{x}}
=
\mathbf{J}_{\mathbf{x}}\boldsymbol{\Sigma}_\theta\mathbf{J}_{\mathbf{x}}^\top.
\]
We refer to these induced distributions as Gaussian embeddings.
These moments are useful, but they are not automatically uncertainties for a downstream task.
A downstream task does not usually ask how much an embedding or a score varies in isolation. It
asks whether a decision changes: whether an item changes class, crosses a cluster boundary, enters
a graph, or changes its rank relative to another item. The relevant uncertainty is therefore tied to
the margin or decision rule used by the task.

In the following theoretical results, we use a simple linear binary assignment problem to illustrate why Gaussian embedding uncertainty may not directly reflect uncertainty in a downstream task. This covers a linear classifier, and it also describes the boundary between two Euclidean nearest-centroid regions, where the sign of an affine function determines which
centroid is closer. Figure~\ref{fig:synthetic:toy} serves as illustrative guidance on these following results.

\begin{proposition}
\label{prop:variance-not-assignment-uncertainty}
Let \(h_{\bm{\theta}}\) be a (linearized) contrastive encoder and \(q(\bm{\theta})\) be a (repaired) LLA posterior, then, for any \(\mathbf{x} \in \mathcal{X}\), 
\(
    h_{\bm{\theta}}(\mathbf{x})\mid\mathcal{D}
    \sim
    \mathcal{N}(\boldsymbol{\mu},\boldsymbol{\Sigma})
\). 
Consider a linear binary decision boundary defined as
\(
    D(\mathbf{z})
    :=
    \mathbf{1}\{\mathbf{a}^{\top}\mathbf{z}+b\ge 0\}
\).
Then, denoting the standard gaussian c.d.f. as \(\Phi\), the posterior probability of changing the assignment made at the posterior mean is
\[
    \mathbb{P}\!\left[
        D(h(\mathbf{x}))\neq D(\boldsymbol{\mu})
        \mid\mathcal{D}
    \right]
    =
    \Phi\!\left(
        -
        \frac{
            |\mathbf{a}^{\top}\boldsymbol{\mu}+b|
        }{
            \sqrt{\mathbf{a}^{\top}\boldsymbol{\Sigma}\mathbf{a}}
        }
    \right)\,.
\]
\end{proposition}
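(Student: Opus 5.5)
The plan is to reduce everything to a scalar Gaussian. Under the linearized encoder and the Gaussian LLA posterior, $h_{\btheta}(\mathbf{x})$ is an affine image of $\btheta\sim\mathcal{N}(\btheta^\star,\boldsymbol{\Sigma}_\theta)$, so it is Gaussian with the moments $\boldsymbol{\mu}=h_{\btheta^\star}(\mathbf{x})$ and $\boldsymbol{\Sigma}=\mathbf{J}_{\mathbf{x}}\boldsymbol{\Sigma}_\theta\mathbf{J}_{\mathbf{x}}^\top$ derived above; this gives the first claim directly. The decision $D$ depends on $\mathbf{z}$ only through the scalar margin $g=\mathbf{a}^\top\mathbf{z}+b$. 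Closure of Gaussians under affine maps then gives $g\mid\mathcal{D}\sim\mathcal{N}(m,s^2)$ with $m=\mathbf{a}^\top\boldsymbol{\mu}+b$ and $s^2=\mathbf{a}^\top\boldsymbol{\Sigma}\mathbf{a}$. I assume $s^2>0$, since the formula is otherwise undefined; if $s=0$ the flip probability is $0$ whenever $m\neq0$, which matches the limit of the formula.

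Next I split into two cases according to the sign of $m$.

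\emph{Case $m\ge 0$.} Here $D(\boldsymbol{\mu})=1$, so a flip is the event $\{g<0\}$. Standardizing gives $\mathbb{P}[g<0]=\Phi(-m/s)=\Phi(-|m|/s)$.

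\emph{Case $m<0$.} Here $D(\boldsymbol{\mu})=0$, so a flip is the event $\{g\ge0\}$. Its probability is $1-\Phi(-m/s)=\Phi(m/s)=\Phi(-|m|/s)$, using the symmetry $1-\Phi(u)=\Phi(-u)$.

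Together the two cases give the stated expression.

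The only delicate point is the boundary convention: $D$ uses $\ge$, and in the case $m=0$ the event $\{g<0\}$ has probability $1/2=\Phi(0)$. Because $g$ has a continuous law, whether the boundary is open or closed does not affect any probability. I therefore expect no real obstacle beyond stating the nondegeneracy assumption $\mathbf{a}^\top\boldsymbol{\Sigma}\mathbf{a}>0$ explicitly.
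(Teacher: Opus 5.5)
Your proposal is correct and follows the paper's proof step for step: you reduce to the scalar margin $\mathbf{a}^\top h_{\btheta}(\mathbf{x})+b\sim\mathcal{N}(m,s^2)$, split on the sign of $m$, and use $1-\Phi(u)=\Phi(-u)$ to obtain $\Phi(-|m|/s)$. One small refinement in the degenerate case $s=0$: the flip probability is $0$ also when $m=0$, since then $g=0$ almost surely and the $\ge$ convention gives $D(h(\mathbf{x}))=D(\boldsymbol{\mu})=1$; this matches the paper, which sets the probability to zero whenever $s=0$.
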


This result provides some intuition about the relation between Gaussian embeddings and downstream uncertainty. First, two Gaussian embeddings with the same pointwise covariance can have different assignment uncertainty if their means have different margins. Conversely, two Bayesian embeddings whose covariance matrices have the same trace can have different assignment uncertainty if their posterior variance is oriented differently relative to the boundary. The pointwise covariance is useful only after the downstream assignment has specified which direction and which margin matter. We explain this visually in Figure~\ref{fig:assignment_uncertainty} in Appendix~\ref{sec:appendix:marginal:uncertainty:proofs}.

Second, the closed-form calculation also indicates another limitation of using a single pointwise variance score
in more general downstream tasks: in realistic scenarios, the effective boundary may be implicit, discrete,
or defined jointly by many examples. There may be no single fixed hyperplane \((\mathbf{a},b)\) for
which the above formula is available. In such cases, the uncertainty object must be defined at the
level of the downstream decision itself rather than by a universal scalar summary of
\(\boldsymbol{\Sigma}\).

A similar issue arises in pairwise scores. Consider the case of a retrieval task, denoting
\(\mathbf{x}_i\) to a query and \(\mathbf{x}_j\) and \(\mathbf{x}_k\) two
candidate items, with the system selecting the candidate with the larger
similarity score. Generally, the downstream decision depends on the comparison margin between their scores.

\begin{proposition}
\label{prop:pairwise-variance-downstream}
Suppose a (repaired) LLA posterior \(q(\bm{\theta}) = \mathcal{N}(\bm{\theta}^\star, \bm{\Sigma})\), the two posterior (linearized) scores are jointly Gaussian under
\[
\begin{pmatrix}
    s_{\btheta}(\mathbf{x}_i,\mathbf{x}_j)\\
    s_{\btheta}(\mathbf{x}_i,\mathbf{x}_k)
\end{pmatrix} 
\sim 
\mathcal{N}\left(
    \begin{pmatrix}
        \mu_{ij}\\
        \mu_{ik}
    \end{pmatrix},
    \begin{pmatrix}
        V_{ij}     & C_{ij,ik}\\
        C_{ik, ij} & V_{ik}
    \end{pmatrix}
 \right).
\]
For the downstream comparison
\(
    A_i
    =
    \mathbf{1}
    \left\{
        s_{\btheta}(\mathbf{x}_i,\mathbf{x}_j)
        \ge
        s_{\btheta}(\mathbf{x}_i,\mathbf{x}_k)
    \right\}
\),
 the probability of changing the score rank w.r.t the MAP is given by
\[
    \mathbb{P}
    \left[
        A_i(\btheta)
        \neq
        A_i(\btheta^{\star})
    \right]
    =
    \Phi\!\left(
        -
        \frac{
            |\mu_{ij}-\mu_{ik}|
        }{
            \sqrt{V_{ij}+V_{ik}-2C_{ij,ik}}
        }
    \right),
\]
\end{proposition}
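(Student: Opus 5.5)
The plan is to reduce Proposition~\ref{prop:pairwise-variance-downstream} to Proposition~\ref{prop:variance-not-assignment-uncertainty}, using the score difference as a one-dimensional linear statistic. Define $\Delta_\btheta := s_{\btheta}(\mathbf{x}_i,\mathbf{x}_j)-s_{\btheta}(\mathbf{x}_i,\mathbf{x}_k)$, so that $A_i(\btheta)=\mathbf{1}\{\Delta_\btheta\ge 0\}$. Apply the linear map $\mathbf{a}=(1,-1)^\top$ to the jointly Gaussian score vector in the statement, with $b=0$. Then $\Delta_\btheta$ is Gaussian with mean $\mu_{ij}-\mu_{ik}$ and variance $\mathbf{a}^\top\mathbf{C}\mathbf{a}=V_{ij}+V_{ik}-2C_{ij,ik}$, using the symmetry $C_{ij,ik}=C_{ik,ij}$ of the covariance matrix. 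This is exactly the setting of Proposition~\ref{prop:variance-not-assignment-uncertainty}, with the two-dimensional score vector playing the role of the embedding and $D=A_i$.

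One point to check is that the reference decision $A_i(\btheta^\star)$ coincides with the decision at the posterior mean. Under the linearized model, each score is affine in $\btheta$. Hence the mean of the linearized score equals its value at $\btheta^\star$, so $\mu_{ij}-\mu_{ik}=\Delta_{\btheta^\star}$ and $A_i(\btheta^\star)=\mathbf{1}\{\mu_{ij}-\mu_{ik}\ge0\}$.

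Even without invoking the earlier proposition, the core computation is direct. Write $m=\mu_{ij}-\mu_{ik}$, $\sigma^2=V_{ij}+V_{ik}-2C_{ij,ik}$, and $\Delta=m+\sigma\varepsilon$ with $\varepsilon\sim\mathcal N(0,1)$. Split into cases on the sign of $m$.
\begin{itemize}[leftmargin=*]
\item If $m>0$, a flip means $\Delta<0$, i.e.\ $\varepsilon<-m/\sigma$, which has probability $\Phi(-m/\sigma)$.
\item If $m<0$, a flip means $\Delta\ge0$, i.e.\ $\varepsilon\ge -m/\sigma=|m|/\sigma$, which by symmetry of $\Phi$ has probability $\Phi(-|m|/\sigma)$.
\end{itemize}
Combining the two cases gives the stated formula $\Phi(-|m|/\sigma)$.

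The main obstacle is the boundary and degenerate cases, not the computation itself.
\begin{itemize}[leftmargin=*]
\item When $m=0$, the MAP decision is $A_i=1$ and the flip probability is $\mathbb P[\varepsilon<0]=1/2=\Phi(0)$. The formula therefore still holds; the ties $\{\Delta=0\}$ have measure zero whenever $\sigma>0$.
\item When $\sigma^2=0$, the difference is deterministic and no flip can occur. The expression must then be read as the limit $\Phi(-\infty)=0$ for $m\neq0$, while $m=0$ is an ill-defined $0/0$.
\end{itemize}
I would state the assumption $\sigma^2>0$ explicitly. I would also note that $\sigma^2\ge0$ is guaranteed because it is a quadratic form in a PSD covariance, which is itself valid because the repaired precision is positive definite.
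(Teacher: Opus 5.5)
Your proposal is correct and follows essentially the same route as the paper. You form the score difference $\Delta_{\btheta}$ with mean $\mu_{ij}-\mu_{ik}$ and variance $V_{ij}+V_{ik}-2C_{ij,ik}$, use the centering of the linearized posterior at $\btheta^\star$ to identify $A_i(\btheta^\star)=\mathbf{1}\{\mu_{ij}-\mu_{ik}\ge 0\}$, and split on the sign of the mean; your handling of the degenerate zero-variance case is slightly more careful than the paper's, which asserts zero flip probability without noting that the displayed formula is undefined there.
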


The score-level result has the same interpretation as the embedding-level result. The marginal
variances \(V_{ij}\) and \(V_{ik}\) describe the uncertainty of two scores separately, but the
comparison is determined by their difference. Hence the covariance \(C_{ij,ik}\) is part of the
downstream uncertainty: two uncertain scores may move together and leave their difference stable,
or move against each other and make the comparison unstable. The correct variance for the ranking
decision is the variance of the comparison margin, not the marginal variance of either
score alone. See Figure~\ref{fig:score_rank_uncertainty} in Appendix~\ref{sec:appendix:marginal:uncertainty:proofs}.

These two cases show the same principle at two levels. Pointwise embedding covariance becomes
task-relevant only after it is applied to an assignment margin. Pairwise score variance becomes
task-relevant only after it is applied to a score comparison or threshold margin. When these margins
are explicit, the resulting stability probabilities can be written in closed form. When the downstream
rule is more complex and its boundary is not available analytically, uncertainty must instead be
defined by the induced posterior over the downstream decisions themselves.

\section{Posterior Expected Downstream Tasks and Reliable Cores}
\label{sec:decision-posteriors}

The previous section shows that marginal embedding variances and marginal score variances are not, by themselves, downstream uncertainties. What we want to measure is the downstream task predictive uncertainty after encoder uncertainty has been propagated through the downstream rule. In other words, we want an expectation under the (repaired) LLA posterior over encoders. We now describe our proposal for propagating the encoder uncertainty to the downstream task.

\paragraph{Classification}
For classification, propagating encoder uncertainty means accounting for how different posterior encoder realizations affect the predicted class probabilities. Each draw \(\btheta\sim q(\btheta)\) produces an embedding of \(\mathbf{x}_i\), which a  downstream classifier maps to a probability vector over the classes. We denote the probability assigned to class \(y\) under this realization by \(p_{\btheta}(y|\mathbf{x}_i)\).

Although these probabilities may vary across posterior samples, their coordinates always refer to the same classes. We can therefore combine the predictions by averaging the probability assigned to each class separately. This gives the posterior-averaged classifier
\begin{equation}
\label{eq:posterior_model_averaging_classification}
\bar p_i(y)
=
\mathbb{E}_{\btheta\sim q(\btheta)}
\left[
p_{\btheta}(y\mid\mathbf{x}_i)
\right] \approx
    \tfrac{1}{S}\textstyle \sum_{s=1}^{S}
    p_{\btheta^{(s)}}(y\mid\mathbf{x}_i) \quad,\quad \btheta^{(s)}\sim q(\btheta).
\end{equation}
Thus, \(\bar p_i(y)\) summarizes support for class \(y\) across the encoder realizations represented by the posterior. If these realizations confidently agree on a class, the averaged prediction remains concentrated on that class; disagreement distributes probability across competing classes. When the expectation is not available analytically, it can be approximated using posterior samples.

Thus, for probabilistic classification, the downstream expectation is Bayesian model averaging of the class probabilities under the LLA posterior over encoders. Typically, that expectation cannot be computed in closed form; however, in our experiments, a logistic regressor classifier is used over the embeddings, which induces Gaussian logits. The resulting class probabilities must still be averaged over the posterior; we approximate this expectation using the Monte Carlo estimator in Equation~\ref{eq:posterior_model_averaging_classification}.

\paragraph{Clustering} The same idea cannot be applied directly to the cluster labels. Cluster labels are
arbitrary: cluster \(1\) in one posterior sample does not necessarily correspond to cluster \(1\) in
another posterior sample. Therefore, averaging cluster labels, or averaging one-hot encodings of
cluster labels, is not meaningful without an additional label-alignment step.

Instead, we average a label-invariant clustering quantity: whether two points are assigned to the same
cluster. For each posterior sample \(\btheta^{(s)}\), we embed the evaluation set and apply the same clustering rule used at test time. This gives
sampled assignments \(\hat y_i^{(s)},\ i=1,\ldots,N, \ s=1,\ldots,S\).  Although the numerical labels \(\hat y_i^{(s)}\) are not comparable across samples, the event
\(\hat y_i^{(s)}=\hat y_j^{(s)}\) is comparable. We therefore build the posterior similarity matrix (PSM) \citep{bayesianclustering}, defined as
\begin{equation}
\label{eq:posterior_expected_coassignment}
    P_{ij}
    =
    \mathbb{P}_{\btheta\sim q(\btheta)}
    \left[
        i \text{ and } j \text{ are assigned to the same cluster}
    \right]
    \approx
    \frac{1}{S}
    \sum_{s=1}^{S}
    \mathbf{1}\{\hat y_i^{(s)}=\hat y_j^{(s)}\}.
\end{equation}
In other words, \(P_{ij}\) is the posterior probability that points \(i\) and \(j\) remain together under encoder perturbations, and the clustering analogue of the posterior-averaged class-probability vector in classification. Pairwise coassignment probabilities need not define a globally consistent partition. However, a final clustering step (e.g. spectral clustering) can be easily approached using \(1 - P\) as distance function, producing final consensus labels.

\subsection{Measuring reliability through selective retention}
\label{sec:selective:retention}

The posterior-averaged downstream output also provides a reliability score for selective retention.
The goal is to test whether the downstream result improves after rejecting samples whose posterior
behaviour is unstable. This gives a quality--coverage tradeoff: coverage is the fraction of retained
inputs, and quality is measured by the downstream metric on the retained set.

For clustering, reliability is computed from the PSM in \eqref{eq:posterior_expected_coassignment}, since raw cluster labels
cannot be averaged. Let \(c_i\) be the consensus label obtained from the final clustering step of item \(i\). We define
\begin{equation}
\label{eq:local_reliability}
    \rho_i
    =
    \tfrac{1}{N-1}
    \textstyle\sum_{j \neq i}
    P_{ij}\,
    \mathbf{1}\{c_i=c_j\}.
\end{equation}
as a measure of the robustness of the consensus assignment for item \(i\). This is high when the same-cluster relations around \(i\) are repeatedly recovered across
posterior samples. A \textit{reliable core} is obtained by retaining the highest-\(\rho_i\) items per cluster. As a result, uncertain or border items are discarded first, while items whose cluster membership is consistently supported by the posterior are retained. Varying the retained fraction gives a quality--coverage frontier, showing the downstream result improvement when posterior-unstable inputs are discarded. This is especially useful in clustering, where unstable
boundary points can obscure the reliable core structure of the partition.

\paragraph{Scalability} The described approach requires computing the probability matrix of the whole inputs, which requires \(\mathcal{O}(N^2)\) space. This is clearly prohibitive in real-world clustering scenarios. To alleviate this scalability issue, we approximate those quantities by a \(k\)-NN approach, where only the nearest \(k\) inputs (in embedding space) are used to compute these quantities.

\section{Experiments}

The experiments evaluate whether the geometric construction above yields useful uncertainty estimates after the contrastive posterior is propagated through downstream decisions. Appendix~\ref{app:experiments} includes further details in datasets, posterior, baselines, and metrics.

The experiments follow a transfer-style, post-hoc uncertainty-estimation protocol. Rather than retraining the encoder under a Bayesian objective, we take an already trained deterministic checkpoint as the MAP solution, fit a last-layer LLA using the  contrastive objective, and propagate the resulting posterior through the downstream task, this is closely related to post-hoc Laplace inference for contrastive metric learning~\citep{warburg2023bayesian}. This follows the standard use of Laplace and LLA to equip pretrained networks with uncertainty without modifying their original training procedure, often restricting inference to the last layer or a subnetwork for scalability~\citep{Daxberger2021LaplaceRedux,daxberger2021subnetwork,Deng2022AcceleratedLLA,Ortega2024VariationalLLA}. When the checkpoint was obtained using a different pretraining objective, the resulting posterior should therefore be interpreted as a task-adapted local posterior around the transferred representation, rather than as a posterior over the original process.

\begin{table}[t]
  \centering
  \caption{Curvature diagnostics and downstream posterior-consensus gains (w.r.t. MAP) on EMNIST and CIFAR100 using a pretrained CLIP encoder, averaged over 5 seeds. Output-Hessian columns report the minimum eigenvalue before repair ($\lambda_{\min}$), the proportion of negative eigenvalues before repair, and the mean per-output-Hessian Frobenius change induced by the repair ($\overline{\|\Delta H\|}_F$).}
  \label{tab:ablation:repairs}
  \resizebox{\textwidth}{!}{%
  \begin{tabular}{lllrrrrrr}
    \toprule
    & & & \multicolumn{3}{c}{Output-Hessian repair} & \multicolumn{3}{c}{Posterior change} \\
    \cmidrule(lr){4-6} \cmidrule(lr){7-9}
    Dataset & Loss & Repair & $\lambda_{\min}$ & Neg. rate & $\lVert\Delta H\rVert_F$ & $\Delta$ACC $\uparrow$ & $\Delta$ECE $\downarrow$ & $\Delta$NLL $\downarrow$\\
    \midrule
    EMNIST & Bernoulli distance & Radial & $-6.22\times 10^{-3}$ & $50.2\%$ & $22.48$ & $+0.009$ & $-0.030$ & $-0.108$ \\
     & Bernoulli distance & Spectral & $-6.22\times 10^{-3}$ & $50.2\%$ & $3.68\times 10^{-4}$ & $+0.008$ & $-0.030$ & $-0.108$ \\
     & Margin & Radial & $-1.22\times 10^{0}$ & $13.3\%$ & $23.20$ & $+0.009$ & $-0.030$ & $-0.109$ \\
     & Margin & Spectral & $-1.22\times 10^{0}$ & $13.3\%$ & $0.72$ & $+0.009$ & $-0.030$ & $-0.108$ \\
    \midrule
    CIFAR-100 & Bernoulli distance & Radial & $-3.32\times 10^{-11}$ & $49.3\%$ & $22.60$ & $+0.021$ & $-0.080$ & $-0.451$ \\
     & Bernoulli distance & Spectral & $-3.32\times 10^{-11}$ & $49.3\%$ & $9.13\times 10^{-13}$ & $+0.021$ & $-0.080$ & $-0.452$ \\
     & Margin & Radial & $-9.92\times 10^{-1}$ & $14.0\%$ & $23.24$ & $+0.022$ & $-0.080$ & $-0.450$ \\
     & Margin & Spectral & $-9.92\times 10^{-1}$ & $14.0\%$ & $0.64$ & $+0.022$ & $-0.080$ & $-0.449$ \\
    \bottomrule
  \end{tabular}
  }
\end{table}

\paragraph{The effect of the curvature repairs}

In Table~\ref{tab:ablation:repairs}, we show the impact of the different repair modes proposed in Appendix~\ref{sec:geometry}. For Bernoulli and Margin losses when applied to the CLIP encoder on both EMNIST and CIFAR100 in the classification task. The table shows the obtained proportion of negative eigenvalues, their minimum values before repair, and the change in matrix norm after repair. Spectral repair achieves substantially smaller curvature corrections than radial repair while preserving comparable downstream results. This advantage holds for both Bernoulli-distance and margin losses. See the implementation details of this experiment in Appendix~\ref{app:training-posterior}.

\subsection{Metagenomic Binning}
\label{sec:experiment:metagenomic}

The process of organizing DNA fragments from a mixed microbial sample to recover the full (source) genomes is known as metagenomic binning \citep{kuninBioinformaticiansGuideMetagenomics2008}. SemiBin2 ~\citep{pan2023semibin2} performs this task by clustering a contrastively learned embedding space in which contigs expected to originate from the same genome are brought closer together, while contigs expected to belong to different genomes are pushed apart. This is achieved through a pairwise margin loss, shown in Equation~\ref{eq:margin:loss}, making SemiBin2 a natural setting for evaluating our relational formulation. 
We use this contrastive structure to study whether the proposed approach can provide meaningful uncertainty over the learned pairwise relationships and, ultimately, over the resulting metagenomic representation.

We do not train a new contrastive binning model from scratch. Instead, we use the official pretrained SemiBin2 encoder as the MAP reference model and fit a repaired (using spectral repair) last-layer KFAC LLA posterior around this checkpoint.
For each posterior draw, we re-embed the contigs and apply $k$-means with $k=500$. We estimate the posterior coassignment probability $P_{ij}$ by the fraction of sampled partitions in which contigs $i$ and $j$ are assigned to the same cluster. A consensus partition is then obtained by applying spectral clustering to the $15$-nearest-neighbor graph constructed from $P$.\footnote{Appendix~\ref{app:ablations} shows the obtained results for different values of \(k\).} 
We evaluate on the Marine 5, Marine 6, Plant 5, and Plant 6 benchmark splits introduced with DNABERT-S~\citep{Zhou2024DNABERTS}, where the quality of a cluster is measured in terms of its F1 score, naming high quality (HQ) clusters to those with \(F1 > 0.9\).

\begin{figure}[t]
    \centering
    \includegraphics[width=\linewidth]{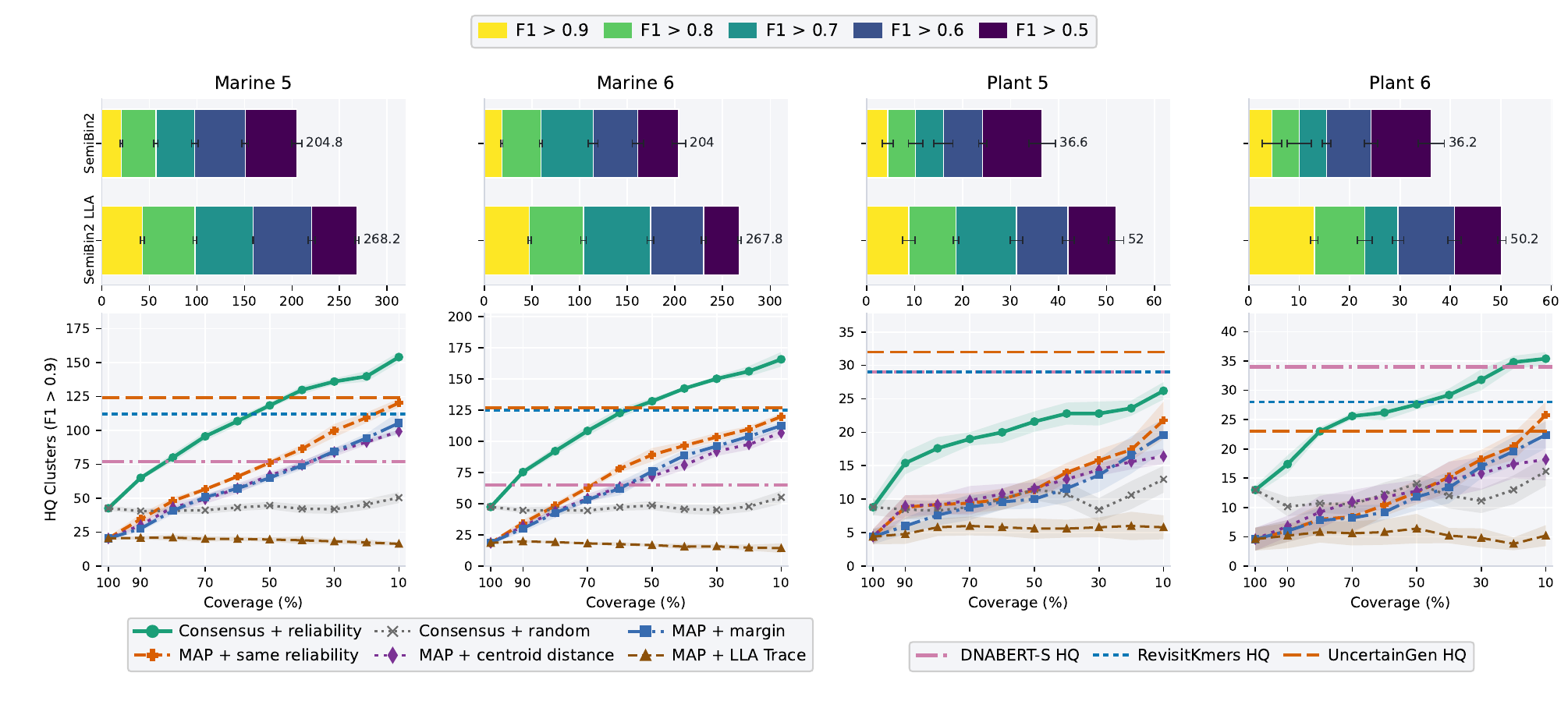}
    \caption{Genome reliable-core results on Marine and Plant splits. Top panels compare high-quality cluster counts for SemiBin2 and the LLA posterior-consensus variant across F1 thresholds. Bottom panels show high-quality clusters retained as coverage is reduced with different approaches.}
    \label{fig:genome-reliable-cores}
\end{figure}

Figure~\ref{fig:genome-reliable-cores} (top)
compares the SemiBin2 MAP clustering with the LLA posterior-consensus clustering across cluster-level $F_1$ thresholds. The figure shows that posterior LLA consensus (SemiBin2-LLA) increases the number of HQ clusters on all four datasets compared to the default MAP model (SemiBin2). The gains are not restricted to a single F1 threshold: the LLA variant shifts the full high-quality-cluster profile upward relative to the MAP baseline. This indicates that posterior sampling is not merely filtering isolated ambiguous contigs, but stabilizing the recovered bin structure.

Figure~\ref{fig:genome-reliable-cores} (bottom) evaluates selective clustering. The coverage curves show the number of HQ clusters obtained when progressively retaining only a fraction of the contigs (see Section~\ref{sec:selective:retention}). Here, \emph{coverage} is the percentage of contigs retained within each predicted cluster: \(100\%\) keeps all contigs, while \(10\%\) keeps only the \(10\%\) ranked as most reliable (cf.\ Equation~\ref{eq:local_reliability}). The curves compare different ways of performing this selection. We consider two variants based on consensus clustering: \emph{Consensus + reliability} uses the consensus clustering and ranks contigs by the reliability score \(\rho_i\), while \emph{consensus + random} uses the same clustering but rejects contigs randomly. As baseline, we use the SemiBin2 MAP-based clustering combined with three different ranking strategies based on geometric distance (\emph{MAP + centroid distance}), assignment margin (\emph{MAP + margin}), and pointwise posterior variance (\emph{MAP + LLA trace}). The horizontal lines report the HQ-cluster counts of the reference methods, namely DNABERT-S HQ \citep{Zhou2024DNABERTS}, RevisitKmers \citep{celikkanat2024revisiting} and UncertainGen \citep{celikkanat2025uncertaingen}. Besides showing as full horizontal lines, neither of these methods present a 100\% coverage, as all of them reject contigs during clustering, however, this coverage fraction is not reported in either of the works.

Across all four datasets, \emph{consensus + reliability} produces substantially more HQ clusters as coverage decreases than random rejection or the MAP-based heuristics, and surpasses the reference methods in every dataset except for Plant 5. Using the same reliability ranking with MAP assignments also improves over most pointwise heuristics, but remains well below the posterior-consensus construction. These results support, in the real-world metagenomic binning task, the theoretical results presented in this work: marginal or geometric uncertainty scores are weak proxies for a relational decision, whereas posterior agreement after applying the clustering rule directly captures whether the resulting groups remain stable under encoder perturbations.

\subsection{Image Classification}

For the image experiments, we apply the proposed approach to pretrained visual
representations and study how uncertainty in the learned embedding space
propagates to downstream classification and clustering. We consider CIFAR-100
and EMNIST using CLIP~\citep{radford2021learning} and
SimCLR~\citep{chen2020simple} representations, together with SICAPv2 prostate
histopathology patches~\citep{SILVARODRIGUEZ2020105637}, where we use CLIP and CONCH-v1.5, a vision--language foundation model for computational pathology~\citep{Lu2024CONCH}. The classification experiments
test whether posterior uncertainty over the representation improves predictive
uncertainty after a downstream readout, while clustering provides a relational
setting in which reliability depends on the stability of the groups induced by
the embedding space.

Again, we use each pretrained encoder as the MAP reference model and fit a GGN KFAC last-layer LLA posterior around this checkpoint~\citep{Martens2015KFAC}. For
classification, \textit{MAP} trains a multinomial logistic regressor on the
deterministic embeddings. \textit{Sampled LLA} samples encoders from the
posterior, trains the same logistic regressor in each sampled embedding space,
and averages the resulting predictive probabilities.
\textit{Gaussian LLA} places a KFAC Gaussian posterior over the parameters of the final linear embedding layer. The resulting uncertainty is propagated through the linear logistic classifier to obtain Gaussian logits. We train the classifier with Adam~\citep{kingma2015adam} using Monte Carlo estimates of the marginal predictive likelihood.

\begin{table*}[t]
  \centering
  \caption{Classification results on CIFAR-100, EMNIST, and SICAP. Values are mean $\pm$ one standard deviation over five seeds. The best result for each dataset and encoder is shown in bold. In the second encoder block, SimCLR is used for CIFAR-100 and EMNIST, and CONCH for SICAP.}
  \label{tab:figure2-classification}
  \setlength{\tabcolsep}{3.2pt}
  \renewcommand{\arraystretch}{0.95}
  \resizebox{\textwidth}{!}{%
  \begin{tabular}{llccc ccc ccc}
    \toprule
    & & \multicolumn{3}{c}{CIFAR-100} & \multicolumn{3}{c}{EMNIST} & \multicolumn{3}{c}{SICAP} \\
    \cmidrule(lr){3-5} \cmidrule(lr){6-8} \cmidrule(lr){9-11}
    Encoder & Method & Acc. $\uparrow$ & NLL $\downarrow$ & ECE $\downarrow$ & Acc. $\uparrow$ & NLL $\downarrow$ & ECE $\downarrow$ & Acc. $\uparrow$ & NLL $\downarrow$ & ECE $\downarrow$ \\
    \midrule
    CLIP & MAP & $74.00 \pm 0.00$ & $1.424 \pm 0.000$ & $0.165 \pm 0.000$ & $84.45 \pm 0.13$ & $0.533 \pm 0.010$ & $0.064 \pm 0.001$ & $64.44 \pm 0.04$ & $1.095 \pm 0.000$ & $0.175 \pm 0.000$ \\
     & LLA Sampled & $76.32 \pm 0.05$ & $0.993 \pm 0.004$ & $0.084 \pm 0.000$ & $\mathbf{85.63 \pm 0.11}$ & $0.454 \pm 0.007$ & $0.035 \pm 0.001$ & $66.70 \pm 0.45$ & $0.930 \pm 0.008$ & $0.123 \pm 0.004$ \\
     & Gaussian LLA & $\mathbf{77.34 \pm 0.21}$ & $\mathbf{0.766 \pm 0.002}$ & $\mathbf{0.014 \pm 0.003}$ & $84.95 \pm 0.10$ & $\mathbf{0.449 \pm 0.002}$ & $\mathbf{0.020 \pm 0.002}$ & $\mathbf{66.76 \pm 1.51}$ & $\mathbf{0.824 \pm 0.042}$ & $\mathbf{0.062 \pm 0.018}$ \\
    \midrule
    SimCLR  & MAP & $68.25 \pm 0.01$ & $1.774 \pm 0.012$ & $0.199 \pm 0.001$ & $\mathbf{81.79 \pm 0.29}$ & $0.686 \pm 0.017$ & $0.079 \pm 0.002$ & $79.90 \pm 0.03$ & $0.567 \pm 0.001$ & $0.093 \pm 0.001$ \\
     \quad \quad  / \quad & LLA Sampled & $\mathbf{69.65 \pm 0.12}$ & $1.451 \pm 0.004$ & $0.141 \pm 0.001$ & $81.74 \pm 0.41$ & $0.688 \pm 0.014$ & $0.080 \pm 0.004$ & $80.44 \pm 0.24$ & $0.545 \pm 0.004$ & $0.082 \pm 0.003$ \\
     CONCH & Gaussian LLA & $66.74 \pm 0.20$ & $\mathbf{1.188 \pm 0.002}$ & $\mathbf{0.107 \pm 0.003}$ & $79.82 \pm 0.57$ & $\mathbf{0.673 \pm 0.047}$ & $\mathbf{0.055 \pm 0.043}$ & $\mathbf{80.88 \pm 0.15}$ & $\mathbf{0.478 \pm 0.012}$ & $\mathbf{0.045 \pm 0.018}$ \\
    \bottomrule
  \end{tabular}%
  }
\end{table*}

Table~\ref{tab:figure2-classification} shows that posterior
propagation improves predictive uncertainty over the MAP
baseline in most cases. For CIFAR-100 and EMNIST, the LLA variants generally reduce NLL and
ECE for both CLIP and SimCLR representations, while the corresponding changes
in accuracy are smaller. The same behavior is observed on SICAP: CONCH provides
substantially stronger predictions than the general-purpose CLIP encoder,
reflecting the benefit of pathology-specific pretraining, but posterior
propagation still improves uncertainty quality for both representation
families. In particular, sampled and closed-form LLA generally reduce
NLL and ECE, with the latter producing the strongest overall calibration.
Thus, the main effect of posterior propagation is improved predictive
uncertainty rather than a large change in the MAP decisions.

\section{Discussion, Limitations, and Conclusion}

Contrastive objectives that behave similarly during MAP training can nevertheless induce different local Bayesian models. For this reason, the natural output variable of the loss, the PSD status of its output-Hessian blocks, and the precision spectrum before and after repair should be reported explicitly. These diagnostics help distinguish a valid relational LLA construction from a numerical procedure that produces a PSD matrix while obscuring the underlying loss geometry.

Our analysis is local to a trained solution, and the scalable experiments rely on last-layer or KFAC last-layer LLA rather than a full-network posterior. The proposed repairs yield valid Gaussian precisions, but they remain local approximations to objectives that are not always canonical likelihoods. They should therefore be interpreted as local modeling choices rather than as evidence that the original loss defines a well-specified likelihood. Similarly, reliable cores are task-specific summaries of uncertainty under a particular downstream rule and posterior, rather than universal uncertainty measures.

Overall, contrastive LLA raises two distinct questions. The first is geometric: does the loss provide valid local curvature? Logit-Bernoulli and InfoNCE do, whereas distance-Bernoulli and margin losses generally require repair. The second is operational: how should the resulting posterior be used downstream? Pointwise embedding variance and marginal pair-score variance are generally insufficient because they do not account for the structure of the downstream decision. Propagating the posterior through the downstream rule, for example by sampling repaired encoders and constructing reliable relational cores, instead yields uncertainty measures and quality--coverage tradeoffs that are directly tied to the decisions of interest.

\paragraph{Limitations.} (i) Our posterior should be interpreted as a task-adapted contrastive LLA posterior, rather than as a full Bayesian posterior over the original pretrained model. This distinction is particularly important for experiments based on CLIP, SimCLR, DNABERT-S-style, or CONCH embeddings, since the LLA is fitted after transfer using the available downstream representation and objective, rather than the original pretraining data and loss. Consequently, the posterior captures uncertainty in the transferred embedding/readout map under the fitted last-layer or KFAC approximation, but not all sources of uncertainty in the pretrained feature extractor. (ii) The proposed spectral and radial repairs ensure a valid Gaussian precision, but they also modify the local curvature of distance- and margin-based losses. They should therefore be viewed as local modeling policies rather than as corrections to a fully specified likelihood. (iii) Reliable cores are tied to a particular downstream rule and selection procedure. They measure whether clustering, classification, or grouping decisions remain stable under posterior encoder perturbations, but they are not universal uncertainty scores, calibrated out-of-distribution detectors, or guarantees of semantic correctness. In addition, posterior propagation requires either sampling encoders or propagating Gaussian embeddings through the downstream rule, making it more expensive than computing a single marginal variance score.

\subsection*{AI use statement}

In this work, we used generative AI tools to provide feedback on research methodology and experimental design and to assist with the interpretation and presentation of experimental results. We did not use generative AI tools to generate datasets, formulate mathematical claims, develop theoretical models, provide critical ingredients for or assist in writing proofs, propose hypotheses, implement the proposed methods, clean or reformat datasets, or perform qualitative or thematic data analysis. Translation and research tasks involving surveys or interviews were not applicable to this work.

Additionally, we used generative AI tools for brainstorming, searching for and summarizing potentially relevant literature, suggesting improvements to the structure and presentation of the paper, and drafting or editing text for clarity and readability. All methodological suggestions and interpretations produced with AI assistance were independently evaluated by the authors. All references were manually verified against their original sources, all experimental results and numerical claims were checked against the outputs of experiments conducted by the authors, and all AI-assisted text was reviewed and revised by the authors. We take responsibility for the final content of this work, including text, claims, and artifacts produced with the aid of generative AI.

\subsection*{Ethics statement}

This work develops uncertainty estimation methods using existing benchmark datasets and pretrained encoders. The SICAPv2 experiments use publicly available histopathology data with patient-separated training and test partitions; they provide a methodological evaluation and do not establish clinical suitability. Uncertainty estimates may inherit biases from the underlying data and pretrained representations, and posterior stability does not guarantee correctness. Furthermore, selecting reliable cores may disproportionately exclude rare or underrepresented examples. Applications should therefore assess both predictive performance and the effects of selective rejection across relevant classes and populations, particularly in sensitive domains.

\subsection*{Reproducibility statement}

Sections \ref{sec:background}-\ref{sec:decision-posteriors} describe the relational LLA construction, posterior propagation, and reliable-core selection. Appendices \ref{sec:geometry}-\ref{sec:appendix:marginal:uncertainty:proofs} provide additional curvature derivations, repair definitions, and analysis of the distinction between marginal variance and downstream uncertainty. Appendix~\ref{app:experiments} documents the datasets, preprocessing and data splits, encoder architectures, posterior target layers, contrastive objectives, hyperparameter selection, posterior sampling, downstream rules, and evaluation metrics. Experiments use five random seeds unless otherwise stated. Appendices~\ref{sec:additional-clustering-results}-\ref{app:ablations} report additional clustering results and ablations examining the sensitivity of the method to its modeling and downstream choices.



\bibliographystyle{plainnat}
\bibliography{references}

\newpage

\appendix

\section{Additional Related Work}
\label{app:related_work}

The Laplace approximation represents the posterior near a mode by a
Gaussian whose precision is determined by the local curvature of the
negative log posterior. In neural networks, this curvature is commonly
approximated or structured using generalized Gauss--Newton (GGN),
Fisher, Kronecker-factored, diagonal, low-rank, last-layer, or
subnetwork approximations
\citep{Daxberger2021LaplaceRedux,Martens2020NaturalGradient,
Kristiadi2020BeingBayesian, daxberger2021subnetwork}.

A useful modern interpretation is that GGN-based Laplace performs
Bayesian inference in a locally linearized generalized linear model,
rather than simply approximating the Hessian of the original nonlinear
network \citep{Immer2021LocalLinearization}. This view has motivated
several scalable Linearized Laplace Approximation (LLA) variants,
including variational formulations, low-rank feature approximations,
surrogate neural-kernel approximations, and quadratic corrections to
the local posterior precision
\citep{ortega2026uncertainty,Deng2022AcceleratedLLA,
Ortega2026SurrogateNeuralKernel,Jimenez2026QuadraticLLA}.
Related work has also enriched Bayesian last layers using subsampled
neural-tangent features, retaining a tractable local linear model while
providing greater expressivity than standard last-layer Laplace
\citep{CalvoOrdonez2026RicherBayesianLastLayers}.

Beyond Euclidean Gaussian approximations, Riemannian and tubular
Riemannian Laplace methods adapt posterior geometry to curved,
anisotropic, and symmetry-induced low-loss regions of the neural-network
loss landscape
\citep{Bergamin2023RiemannianLaplace,David2025TubularRiemannianLaplace}.
These approaches have primarily been developed for supervised
likelihoods that decompose over individual examples. In contrast, our
setting applies LLA to relational objectives, where each loss term
depends jointly on pairs or sets of network outputs and the relevant
curvature therefore lives in the output space of a contrastive unit.

\section{Contrastive Curvature Pathologies and Repairs}
\label{sec:geometry}

This appendix develops the loss-level details of the relational LLA
construction in Section~\ref{sec:background}. For a relational unit $t$,
the linearized Laplace precision contains blocks of the form
$\mathbf{J}_t^\top\mathbf{H}_t\mathbf{J}_t$, where
$\mathbf{H}_t=\nabla^2_{f_t}\ell_t(f_t)$ and $f_t$ is the output
variable used by the contrastive loss. The choice of this variable
matters. Logit losses define curvature with respect to a scalar score
or a vector of candidate logits, whereas distance losses define it
with respect to an embedding difference. Although these choices can
have little visible effect during training, they can lead to very
different local Bayesian models.

Throughout this section, we write
$\mathbf{z}_i=h_{\boldsymbol{\theta}}(\mathbf{x}_i)$,
$s_{ij}=\tau^{-1}\mathbf{z}_i^\top\mathbf{z}_j$, and
$\mathbf{v}_{ij}=\mathbf{z}_i-\mathbf{z}_j$, where $\tau>0$ is the
temperature. All statements about curvature refer to the output space,
before adding the prior, damping, or numerical jitter. We use $y=1$
for a positive pair and $y=0$ for a negative pair.

\subsection{Logit losses give PSD output curvature}

\paragraph{Logit-Bernoulli.}
For a scalar pair score $s_{ij}$, the Bernoulli negative log-likelihood
is $\llogit(s_{ij},y)=\operatorname{softplus}(s_{ij})-ys_{ij}$.
Writing $p=\sigma(s_{ij})$, its first derivative is
$\partial\llogit/\partial s_{ij}=p-y$, and its second derivative is
$\partial^2\llogit/\partial s_{ij}^2=p(1-p)\ge0$. Each output
curvature block is therefore PSD, as is the corresponding GGN
contribution $p(1-p)\mathbf J_t^\top\mathbf J_t$, regardless of the
Jacobian $\mathbf J_t$.

\paragraph{InfoNCE.}
For an anchor $i$ with candidate set $C_i$ and positive candidate
$p(i)\in C_i$, the output is the vector of candidate logits
$f_t=(a_{t,j})_{j\in C_i}$, with $a_{t,j}=s_{ij}$. The InfoNCE loss is
\[
    \lnce(f_t)
    =-a_{t,p(i)}+\log\sum_{j\in C_i}\exp(a_{t,j}).
\]
If $\mathbf q_i=\operatorname{softmax}(f_t)$, the gradient is
$\nabla_{f_t}\lnce=\mathbf q_i-\mathbf e_{p(i)}$, and the Hessian is
\[
    \nabla^2_{f_t}\lnce
    =\operatorname{diag}(\mathbf q_i)-\mathbf q_i\mathbf q_i^\top.
\]
This is the covariance matrix of the one-hot representation of a
categorical variable with probabilities $\mathbf q_i$. Equivalently,
for any vector $\mathbf r$,
\[
    \mathbf r^\top
    \bigl(\operatorname{diag}(\mathbf q_i)-\mathbf q_i\mathbf q_i^\top\bigr)
    \mathbf r
    =\sum_{j\in C_i}q_{i,j}r_j^2
    -\left(\sum_{j\in C_i}q_{i,j}r_j\right)^2
    =\operatorname{Var}_{j\sim\mathbf q_i}(r_j)\ge0.
\]
InfoNCE therefore also gives a PSD output curvature block. Its rank is
at most $|C_i|-1$, since adding the same constant to every logit leaves
the softmax unchanged. This rank deficiency is not a pathology: the
prior and numerical jitter make the final posterior precision
invertible.

\subsection{Distance losses can give indefinite output curvature}

For distance-based losses, the output variable is the embedding
displacement $\mathbf v=\mathbf v_{ij}=\mathbf z_i-\mathbf z_j$.
We therefore need the curvature with respect to $\mathbf v$, rather
than a scalar logit, and it is in this space that negative curvature
can arise.

To see where it comes from, consider a radial loss
$\ell(\mathbf v)=g(r)$, where $r=\|\mathbf v\|_2>0$ and
$\mathbf u=\mathbf v/r$. Its Hessian is
\[
    \nabla_{\mathbf v}^2\ell(\mathbf v)
    =g''(r)\mathbf u\mathbf u^\top
    +\frac{g'(r)}{r}
    \bigl(\mathbf I_d-\mathbf u\mathbf u^\top\bigr).
\]
There is one radial eigenvalue, along $\mathbf u$, and $d-1$
tangential eigenvalues, in directions orthogonal to $\mathbf u$.
Tangential perturbations change the orientation of the difference
vector to first order. A negative tangential eigenvalue means that
the loss is locally concave along such a direction, so a small
straight-line displacement in that direction decreases the loss.
This negative curvature cannot be used as Gaussian precision.

\paragraph{Distance-Bernoulli.}
The distance-Bernoulli model sets
$p(\mathbf v)=\exp(-\|\mathbf v\|_2^2)$. For a positive pair
($y=1$), the loss is
$\ldist_+(\mathbf v)=-\log p(\mathbf v)=\|\mathbf v\|_2^2$,
with Hessian $\nabla_{\mathbf v}^2\ldist_+(\mathbf v)=2\mathbf I_d\succeq0$.
Positive pairs therefore do not cause a problem with positive
definiteness.

For a negative pair ($y=0$), the loss becomes
\[
    \ldist_-(\mathbf v)
    =-\log\bigl(1-\exp(-\|\mathbf v\|_2^2)\bigr).
\]
Writing $p=\exp(-\|\mathbf v\|_2^2)$ and $q=1-p$, its Hessian
for $\mathbf v\ne\mathbf0$ is
\begin{equation}
\label{eq:distance_negative_hessian_main}
    \mathbf H_-^{\mathrm{dist}}(\mathbf v)
    =\nabla_{\mathbf v}^2\ldist_-(\mathbf v)
    =4\frac{p}{q^2}\mathbf v\mathbf v^\top
    -2\frac{p}{q}\mathbf I_d.
\end{equation}
The radial eigenvalue is
\[
    \lambda_{\mathrm{dist,rad}}
    =4\frac{p}{q^2}\|\mathbf v\|_2^2-2\frac{p}{q}
    =2\frac{p}{q^2}\bigl(2\|\mathbf v\|_2^2-q\bigr),
\]
which is positive whenever $\mathbf v\ne\mathbf0$. Every tangential
eigenvalue, however, is $\lambda_{\mathrm{dist,tan}}=-2p/q<0$.
The output Hessian of a negative distance-Bernoulli pair is therefore
indefinite when the embedding dimension is $d\ge2$.

\paragraph{Pairwise margin loss.}
The standard pairwise margin loss is
\[
    \lmrg(\mathbf v,y)
    =y\|\mathbf v\|_2^2+(1-y)[m-\|\mathbf v\|_2]_+^2.
\]
For positive pairs, we again have
$\lmrg_+(\mathbf v)=\|\mathbf v\|_2^2$ and
$\nabla_{\mathbf v}^2\lmrg_+(\mathbf v)=2\mathbf I_d$, so the
Hessian is PSD. Inactive negatives, with $\|\mathbf v\|_2\ge m$,
are assigned zero curvature, with the boundary case discussed below.
The problematic case is an active negative with
$0<r=\|\mathbf v\|_2<m$. Its loss is
$\lmrg_-(\mathbf v)=(m-r)^2$, and its Hessian is
\begin{equation}
\label{eq:margin_negative_hessian_main}
    \mathbf H_-^{\mathrm{mrg}}(\mathbf v)
    =2\mathbf u\mathbf u^\top
    +\frac{2(r-m)}{r}
    \bigl(\mathbf I_d-\mathbf u\mathbf u^\top\bigr),
    \qquad \mathbf u=\mathbf v/r.
\end{equation}
The radial eigenvalue is $2$, but every tangential eigenvalue is
$\lambda_{\mathrm{mrg,tan}}=2(r-m)/r<0$. Active negative margin
pairs thus also have indefinite Hessians whenever $d\ge2$.

At $r=0$ and $r=m$, the classical Hessian is not defined. In practice,
these points are either avoided almost surely or treated using
standard numerical stabilizers, such as replacing $r$ by
$\sqrt{\|\mathbf v\|_2^2+\epsilon}$ in diagnostic computations.
The negative curvature does not arise from these exceptional points:
it is present throughout the active region $0<r<m$.

\begin{proposition}
\label{prop:visible-negative-curvature}
Let $\mathbf H_t$ be the output Hessian of a training unit and
$\mathbf J_t$ its last-layer Jacobian. If a parameter perturbation
$\mathbf u$ satisfies
$(\mathbf J_t\mathbf u)^\top\mathbf H_t(\mathbf J_t\mathbf u)<0$,
then $\mathbf J_t^\top\mathbf H_t\mathbf J_t$ is not positive
semidefinite. In particular, consider a negative distance-Bernoulli
pair with $\mathbf v\ne\mathbf0$, or an active margin negative with
$0<\|\mathbf v\|_2<m$. When $d\ge2$, any nonzero tangential
direction $\mathbf w\perp\mathbf v$ that can be written as
$\mathbf w=\mathbf J_t\mathbf u$ gives negative curvature.
\end{proposition}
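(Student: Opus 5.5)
The plan is to prove the general claim directly from the definition of positive semidefiniteness, and then to verify the tangential instances using the Hessian formulas already derived in this appendix. For the first part, recall that a symmetric matrix $\mathbf M$ is PSD iff $\mathbf u^\top\mathbf M\mathbf u\ge0$ for every $\mathbf u$. Taking $\mathbf M=\mathbf J_t^\top\mathbf H_t\mathbf J_t$, associativity of the matrix product gives
\[
\mathbf u^\top\bigl(\mathbf J_t^\top\mathbf H_t\mathbf J_t\bigr)\mathbf u=(\mathbf J_t\mathbf u)^\top\mathbf H_t(\mathbf J_t\mathbf u).
\]
So the hypothesis exhibits a vector with a strictly negative quadratic form, and $\mathbf J_t^\top\mathbf H_t\mathbf J_t$ is not PSD. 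This step is immediate.

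For the \emph{in particular} part, fix $\mathbf w=\mathbf J_t\mathbf u\ne\mathbf0$ with $\mathbf w\perp\mathbf v$, and evaluate $\mathbf w^\top\mathbf H_t\mathbf w$ with the explicit Hessians.

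In the distance-Bernoulli negative case, use \Eqref{eq:distance_negative_hessian_main}. Since $\mathbf v^\top\mathbf w=0$, the rank-one term vanishes and
\[
\mathbf w^\top\mathbf H_-^{\mathrm{dist}}\mathbf w=-2\tfrac{p}{q}\|\mathbf w\|_2^2.
\]
Here $\mathbf v\ne\mathbf0$ gives $q=1-e^{-\|\mathbf v\|_2^2}\in(0,1)$, so the ratio is well defined, and $p>0$. Hence the value is strictly negative.

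In the active margin case, use \Eqref{eq:margin_negative_hessian_main}. Orthogonality gives $\mathbf u_{\mathbf v}^\top\mathbf w=0$ with $\mathbf u_{\mathbf v}=\mathbf v/r$, so that $(\mathbf I_d-\mathbf u_{\mathbf v}\mathbf u_{\mathbf v}^\top)\mathbf w=\mathbf w$. This yields
\[
\mathbf w^\top\mathbf H_-^{\mathrm{mrg}}\mathbf w=\tfrac{2(r-m)}{r}\|\mathbf w\|_2^2,
\]
which is negative because $0<r<m$. The symbol $\mathbf u$ is overloaded between the parameter perturbation and the unit direction, so the unit direction should be renamed when the proof is written out.

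The role of $d\ge2$ is only to guarantee that the orthogonal complement of $\mathbf v$ is nontrivial. Without it, no such nonzero $\mathbf w$ exists and the statement would be vacuous.

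The realizability condition $\mathbf w=\mathbf J_t\mathbf u$ is a hypothesis, not something to prove. The main subtlety is therefore bookkeeping. The output here is $f_t=\mathbf v_{ij}$, so $\mathbf J_t=\mathbf J_{\mathbf x_i}-\mathbf J_{\mathbf x_j}$ is the last-layer Jacobian of the difference, and the argument applies to this $\mathbf J_t$. Once that is checked, combining the tangential computation with the first part finishes the proof.
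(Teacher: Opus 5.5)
Your proof is correct and follows the paper's argument. The paper's proof is exactly the identity $\mathbf u^\top\mathbf J_t^\top\mathbf H_t\mathbf J_t\mathbf u=(\mathbf J_t\mathbf u)^\top\mathbf H_t(\mathbf J_t\mathbf u)$, with the \emph{in particular} clause resting on the negative tangential eigenvalues derived just before the proposition; your explicit evaluations $-2(p/q)\|\mathbf w\|_2^2$ and $2(r-m)r^{-1}\|\mathbf w\|_2^2$ spell out that step correctly.
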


\begin{proof}
For any parameter perturbation $\mathbf u$,
\[
    \mathbf u^\top\mathbf J_t^\top\mathbf H_t\mathbf J_t\mathbf u
    =(\mathbf J_t\mathbf u)^\top\mathbf H_t(\mathbf J_t\mathbf u).
\]
If the right-hand side is negative, the parameter-space block has a
negative quadratic form in the direction $\mathbf u$ and therefore
cannot be PSD.
\end{proof}

Linearizing the encoder therefore does not, by itself, resolve the
problem with distance losses. LLA removes second derivatives of the
network or relational score map, but still uses the output Hessian of
the loss. When this Hessian is indefinite, the corresponding GGN block
can be indefinite too. Contributions from other training units, the
prior, or sufficiently large damping may hide this in the assembled
precision, but the loss itself no longer guarantees a valid Gaussian
precision.

\subsection{PSD output-curvature repairs}

Because the negative curvature arises within the output Hessian of a
contrastive unit, we repair each block before accumulating it in
parameter space. We keep the MAP encoder and relational Jacobian fixed
and replace only $\mathbf H_t$ by a PSD block
$\widetilde{\mathbf H}_t\succeq0$. This gives the repaired precision
\begin{equation}
\label{eq:repaired_precision_main}
    \widetilde{\boldsymbol{\Sigma}}^{-1}
    =\boldsymbol{\Lambda}_0
    +\sum_t\mathbf J_t^\top\widetilde{\mathbf H}_t\mathbf J_t
    +\epsilon\mathbf I,
\end{equation}
where $\boldsymbol{\Lambda}_0$ includes the prior or weight-decay
precision and $\epsilon>0$ is a small numerical jitter. Under the
default spectral repair described below, blocks that are already PSD,
including those from logit-Bernoulli and InfoNCE, are left unchanged
up to numerical tolerance.

A repair makes the local Gaussian precision valid, but its
interpretation remains local. It does not retrain the encoder or
change the MAP embeddings, nor does it establish that the original
distance or margin loss was a fully specified likelihood. It is a
modeling choice that turns an indefinite contrastive curvature block
into a usable PSD precision block. We consider two such repairs.

\paragraph{Spectral repair.}
Our default repair is the spectral projection onto the PSD cone,
which can be applied regardless of the loss. We first remove any
numerical asymmetry by setting
$\mathbf H_{t,\mathrm{sym}}=\tfrac12(\mathbf H_t+\mathbf H_t^\top)$.
If its eigendecomposition is
$\mathbf H_{t,\mathrm{sym}}=\mathbf Q\operatorname{diag}(\lambda_1,\ldots,\lambda_m)\mathbf Q^\top$,
the repaired block is
\begin{equation}
\label{eq:spectral_repair}
    \widetilde{\mathbf H}_t^{\mathrm{spec}}
    =\Pi_+(\mathbf H_t)
    =\mathbf Q\operatorname{diag}([\lambda_1]_+,\ldots,[\lambda_m]_+)\mathbf Q^\top,
    \qquad [a]_+=\max\{a,0\}.
\end{equation}
This retains every nonnegative curvature direction supplied by the
loss and removes the negative directions that could make the Gaussian
precision indefinite. It differs from the absolute-value transform
$\lambda_i\mapsto|\lambda_i|$: negative curvature is discarded,
rather than converted into positive precision. The repair can
therefore reduce the rank of a block. The prior and jitter in
\eqref{eq:repaired_precision_main} make the final precision invertible.

\paragraph{Radial-only repair for distance-based losses.}
For a distance-based loss, the output is the embedding difference
$\mathbf v$, and its eigenspaces are available analytically: the
radial direction $\mathbf u=\mathbf v/\|\mathbf v\|_2$ and the
$d-1$ tangential directions orthogonal to it. Let
$\mathbf P_{\mathrm{rad}}=\mathbf u\mathbf u^\top$ and
$\mathbf P_{\mathrm{tan}}=\mathbf I_d-\mathbf u\mathbf u^\top$.
For a radial Hessian
$\mathbf H(\mathbf v)=\lambda_{\mathrm{rad}}\mathbf P_{\mathrm{rad}}
+\lambda_{\mathrm{tan}}\mathbf P_{\mathrm{tan}}$,
spectral repair retains all nonnegative eigenvalues:
\[
    \widetilde{\mathbf H}_{\mathrm{spec}}(\mathbf v)
    =[\lambda_{\mathrm{rad}}]_+\mathbf P_{\mathrm{rad}}
    +[\lambda_{\mathrm{tan}}]_+\mathbf P_{\mathrm{tan}}.
\]
The radial-only repair used in our experiments goes further. It
retains only the radial curvature, which describes changes in
distance, and applies damping in the tangential subspace:
\[
    \widetilde{\mathbf H}^{\mathrm{rad}}(\mathbf v)
    =[\lambda_{\mathrm{rad}}]_+\mathbf P_{\mathrm{rad}}
    +\epsilon_{\mathrm{tan}}\mathbf P_{\mathrm{tan}}.
\]
Here $\epsilon_{\mathrm{tan}}\ge0$ is a small tangential damping
value, set to zero unless stated otherwise. This is a structured
approximation, rather than the PSD projection of the Hessian, because
it discards tangential curvature even when that curvature is positive.

For negative distance-Bernoulli pairs, write
$\lambda_{\mathrm{dist,rad}}=4(p/q^2)\|\mathbf v\|_2^2-2p/q$,
where $p=\exp(-\|\mathbf v\|_2^2)$ and $q=1-p$. The radial-only
repaired block is then
\[
    \widetilde{\mathbf H}_-^{\mathrm{dist,rad}}(\mathbf v)
    =[\lambda_{\mathrm{dist,rad}}]_+\mathbf u\mathbf u^\top
    +\epsilon_{\mathrm{tan}}\bigl(\mathbf I_d-\mathbf u\mathbf u^\top\bigr).
\]
For active margin negatives, the radial eigenvalue is $2$ and the
tangential eigenvalues are negative, giving
\[
    \widetilde{\mathbf H}_-^{\mathrm{mrg,rad}}(\mathbf v)
    =2\mathbf u\mathbf u^\top
    +\epsilon_{\mathrm{tan}}\bigl(\mathbf I_d-\mathbf u\mathbf u^\top\bigr),
    \qquad 0<\|\mathbf v\|_2<m.
\]
The loss is locally flat for inactive margin negatives away from the
boundary. We set
$\widetilde{\mathbf H}_-^{\mathrm{mrg,rad}}(\mathbf v)=\mathbf0$
for $\|\mathbf v\|_2\ge m$, including the boundary convention
described above.

The difference between spectral and radial-only repair is clearest
for positive pairs. Both distance and margin losses have
\[
    \mathbf H_+^{\mathrm{dist}}(\mathbf v)
    =\mathbf H_+^{\mathrm{mrg}}(\mathbf v)
    =2\mathbf I_d
    =2\mathbf P_{\mathrm{rad}}+2\mathbf P_{\mathrm{tan}}.
\]
Spectral repair leaves these PSD blocks unchanged, so
$\widetilde{\mathbf H}_+^{\mathrm{dist,spec}}(\mathbf v)
=\widetilde{\mathbf H}_+^{\mathrm{mrg,spec}}(\mathbf v)=2\mathbf I_d$.
Radial-only repair instead retains the radial component and damps
the tangential component:
\[
    \widetilde{\mathbf H}_+^{\mathrm{dist,rad}}(\mathbf v)
    =\widetilde{\mathbf H}_+^{\mathrm{mrg,rad}}(\mathbf v)
    =2\mathbf u\mathbf u^\top
    +\epsilon_{\mathrm{tan}}\bigl(\mathbf I_d-\mathbf u\mathbf u^\top\bigr).
\]

For active negative margin pairs, the two repairs therefore agree up
to the small tangential damping term. For positive pairs, spectral
repair preserves the full PSD Hessian $2\mathbf I_d$, while
radial-only repair removes its valid positive tangential curvature.
This makes radial-only repair more intrusive in Frobenius norm. We
use it as an ablation or modeling choice that retains only evidence
about changes in distance, rather than as the nearest PSD projection
of the original Hessian. It applies specifically to radial distance
losses and should not be used for candidate-logit losses such as
InfoNCE.

\paragraph{Damping and jitter are not curvature repairs.}
Adding $\lambda\mathbf I$, or increasing the prior precision, can
make the assembled parameter-space precision positive definite. In
particular, for sufficiently large $\lambda$,
\[
    \boldsymbol{\Lambda}_0
    +\sum_t\mathbf J_t^\top\mathbf H_t\mathbf J_t
    +\lambda\mathbf I\succ0.
\]
Although this can help numerically, it does not repair the
contrastive output curvature: sufficiently large isotropic precision
can simply overwhelm the negative-curvature blocks. In this work,
we use damping and jitter to stabilize inversion after repairing the
output curvature. They are not our primary means of obtaining a
valid posterior from an indefinite loss geometry.

\begin{table}[t]
    \centering
    \small
    \setlength{\tabcolsep}{4pt}
    \renewcommand{\arraystretch}{1.05}
    \begin{tabular}{@{}p{2.2cm}p{4.2cm}p{4.2cm}p{3.4cm}@{}}
        \toprule
        Mode & Construction & What it does & What it does not do \\
        \midrule
        Raw GGN &
        Use $\mathbf H_t=\nabla^2_{f_t}\ell_t$ directly &
        Retains the exact output Hessian of the linearized loss &
        Does not guarantee a PSD precision for distance or margin negatives \\
        Spectral repair &
        Symmetrize, eigendecompose, and set negative eigenvalues to zero &
        Projects each output block onto the PSD cone, regardless of the loss &
        Removes negative curvature; does not turn it into positive evidence \\
        Radial-only repair &
        Split distance-loss blocks into radial and tangential components;
        retain radial curvature and damp the tangential component &
        Gives a structured PSD approximation for distance losses; agrees with
        spectral repair on active negative margin blocks up to damping &
        Is not the PSD projection and can remove valid positive tangential
        curvature from already-PSD positive-pair blocks \\
        Damping/jitter &
        Add isotropic precision after accumulation &
        Stabilizes inversion and can make the final matrix strictly positive definite &
        Does not identify or repair the loss-level curvature pathology \\
        Empirical Fisher &
        Replace $\mathbf H_t$ by $\mathbf g_t\mathbf g_t^\top$ &
        Gives a PSD block by construction &
        Does not repair the Hessian and can reduce relational curvature to
        rank one per unit \\
        \bottomrule
    \end{tabular}
    \caption{Repair and fallback modes considered in this work. The main
    experiments use repaired GGN curvature, while empirical Fisher and
    alternative repairs are used as ablations.}
    \label{tab:repair_modes}
\end{table}

Logit-Bernoulli and InfoNCE give PSD output curvature directly,
whereas negative pairs in distance-Bernoulli and pairwise margin
losses can have negative tangential curvature. Spectral repair
projects these indefinite blocks onto the PSD cone before they are
accumulated in parameter space. Radial-only repair is a structured
approximation for distance losses: it retains curvature associated
with changes in distance and damps the tangential, rotation-like
directions. It can also remove positive tangential curvature, as it
does for positive distance and margin pairs. Radial-only repair
should therefore be understood as a modeling ablation that retains
radial evidence, rather than as the nearest PSD projection of the
Hessian.

\section{Empirical Fisher Alternative for Relational Curvature}
\label{app:ef}

Our main construction uses the output Hessian
$\mathbf H_t^{\mathrm{GGN}}=\nabla^2_{f_t}\ell_t(f_t)$ in the
relational GGN block
$\mathbf J_t^\top\mathbf H_t^{\mathrm{GGN}}\mathbf J_t$.
When the output Hessian is indefinite, as it is for negative
distance-Bernoulli pairs or active negative margin pairs, we project
it onto the PSD cone before accumulating the resulting blocks in
parameter space. The empirical Fisher (EF) offers an alternative
that does not require Hessian blocks.

For a relational unit $t$, let
$\mathbf g_t=\left.\nabla_{f_t}\ell_t(f_t)\right|_{f_t=f_t(\boldsymbol{\theta}^\star)}$
be the output gradient at the MAP solution. EF replaces the output
Hessian with the outer product of this gradient:
\begin{equation}
\label{eq:ef_output_block}
    \mathbf H_t^{\mathrm{EF}}=\mathbf g_t\mathbf g_t^\top.
\end{equation}
The corresponding parameter-space precision is
\begin{equation}
\label{eq:ef_precision}
    \boldsymbol{\Sigma}_{\mathrm{EF}}^{-1}
    =\boldsymbol{\Lambda}_0
    +\sum_t\mathbf J_t^\top\mathbf H_t^{\mathrm{EF}}\mathbf J_t
    =\boldsymbol{\Lambda}_0
    +\sum_t(\mathbf J_t^\top\mathbf g_t)
    (\mathbf J_t^\top\mathbf g_t)^\top.
\end{equation}
Each contribution is an outer product and is therefore PSD. With a
strictly positive definite prior precision or positive damping, the
resulting precision defines a proper Gaussian posterior approximation.

\paragraph{Why EF is attractive.}
EF has two practical advantages. It requires neither an
eigendecomposition of the output Hessian nor a separate repair for
distance or margin losses. It also applies in the same way to scalar
scores, candidate-logit vectors, and embedding differences. This
makes it a convenient fallback when the output Hessian is difficult
to form, affected by nonsmoothness, or indefinite.

\paragraph{EF is not a Hessian repair.}
Although EF is PSD, it does not project the Hessian onto the PSD
cone. Instead, it replaces second-order curvature with first-order
sensitivity at the MAP solution, giving the posterior a different
local interpretation. Spectrally repaired GGN retains the
nonnegative curvature directions of the loss and removes the
negative directions that cannot serve as precision. EF retains only
the current gradient direction for each relational unit.

For relational contrastive objectives, this is a substantial
difference. Each unit contributes a block of rank at most one,
$\operatorname{rank}(\mathbf H_t^{\mathrm{EF}})\le1$. The sum over
many units can still have high rank, but an individual unit supplies
only one direction. The GGN Hessian of a candidate-set loss such as
InfoNCE, by comparison, can describe a full covariance structure
over the candidates within that unit.

\paragraph{Relation to true Fisher and GGN.}
For well-specified likelihood models, and under the usual regularity
conditions, the expected Fisher can coincide with the expected
Hessian of the negative log-likelihood. The empirical Fisher in
\eqref{eq:ef_output_block} instead uses the observed target for each
training unit to form a single gradient outer product. For example,
consider the scalar logit-Bernoulli loss
$\ell(s,y)=\operatorname{softplus}(s)-ys$, with $p=\sigma(s)$.
Its GGN curvature is $\mathbf H^{\mathrm{GGN}}=p(1-p)$, whereas
its EF curvature is $\mathbf H^{\mathrm{EF}}=(p-y)^2$. Both are
nonnegative, but they measure different quantities.

Similarly, for InfoNCE with candidate probabilities
$\mathbf q=\operatorname{softmax}(f_t)$ and positive candidate
$p(i)$, the two blocks are
\[
    \mathbf H_t^{\mathrm{GGN}}
    =\operatorname{diag}(\mathbf q)-\mathbf q\mathbf q^\top,
    \qquad
    \mathbf H_t^{\mathrm{EF}}
    =(\mathbf q-\mathbf e_{p(i)})
    (\mathbf q-\mathbf e_{p(i)})^\top.
\]
The GGN block gives the softmax covariance over candidates. The EF
block is an outer product of the current prediction error, with rank
at most one, and therefore loses the multi-candidate interaction
geometry retained by the GGN.

\paragraph{Distance and margin losses.}
The same distinction arises for distance-based losses. As before,
write $\mathbf v=\mathbf z_i-\mathbf z_j$, $r=\|\mathbf v\|_2$,
and $\mathbf u=\mathbf v/r$ for $r>0$. A positive pair with
squared-distance loss $\ell^+(\mathbf v)=\|\mathbf v\|_2^2$
has GGN/Hessian $\mathbf H_+^{\mathrm{GGN}}=2\mathbf I_d$,
but its EF block is
$\mathbf H_+^{\mathrm{EF}}=(2\mathbf v)(2\mathbf v)^\top
=4\mathbf v\mathbf v^\top$. Thus, even when the positive-pair
Hessian is already PSD, EF replaces isotropic attraction curvature
with a single radial direction.

For an active negative margin pair, the loss is
$\lmrg_-(\mathbf v)=(m-r)^2$, with $0<r<m$. The raw Hessian has
radial eigenvalue $2$ and negative tangential eigenvalues. Both
spectral and radial-only repair remove these negative tangential
directions and retain the radial curvature, with radial-only repair
optionally adding tangential damping:
\[
    \widetilde{\mathbf H}^{\mathrm{GGN}}_{\mathrm{mrg}}
    =2\mathbf u\mathbf u^\top
    \quad\text{or}\quad
    2\mathbf u\mathbf u^\top
    +\epsilon_{\mathrm{tan}}
    (\mathbf I_d-\mathbf u\mathbf u^\top).
\]
Apart from this damping term, repaired GGN and EF are both PSD and
radial, but they assign different precision along the radial
direction. Repaired GGN retains the local second derivative of the
active margin loss. EF uses the squared first derivative, so its
precision vanishes as $r$ approaches the margin.

For a negative distance-Bernoulli pair, the loss is
$\ldist_-(\mathbf v)=-\log(1-\exp(-\|\mathbf v\|_2^2))$.
Writing $p=\exp(-r^2)$ and $q=1-p$, its gradient is
$\nabla_{\mathbf v}\ldist_-=-2(p/q)\mathbf v$, giving the EF block
\[
    \mathbf H^{\mathrm{EF}}_{\mathrm{dist}}
    =4\frac{p^2}{q^2}\mathbf v\mathbf v^\top.
\]
This block is again PSD and radial, but it is not the PSD projection
of the Hessian. Spectral repair retains the positive radial Hessian
eigenvalue and removes the negative tangential eigenvalues, whereas
EF replaces the Hessian with a squared-gradient magnitude.

\paragraph{What EF does and does not provide.}
EF provides a simple PSD approximation, but it should not be taken
to preserve the relational geometry of the contrastive loss. It
produces PSD output blocks by construction, avoids explicit Hessian
eigendecompositions and loss-specific repairs, and gives a simple
fallback curvature for all relational units.

It does not, however, project the true Hessian onto the PSD cone or
preserve high-rank candidate interactions such as the InfoNCE
softmax covariance. It also discards tangential or isotropic
curvature that may be present in PSD Hessian blocks, and it does not
distinguish invalid negative curvature from valid positive curvature.

These properties make EF useful as a PSD fallback and as an
ablation, but it changes the geometry of the local Bayesian model
more substantially than spectral or radial repair. We therefore use
repaired GGN curvature as the default in our main experiments.

We repeat the metagenomic binning experiment from
Section~\ref{sec:experiment:metagenomic}, whose original results
appear in Figure~\ref{fig:genome-reliable-cores}, using EF in place
of GGN. Figure~\ref{fig:genome-reliable-cores:ef} shows the results.

\begin{figure}[t]
    \centering
    \includegraphics[width=0.98\linewidth]{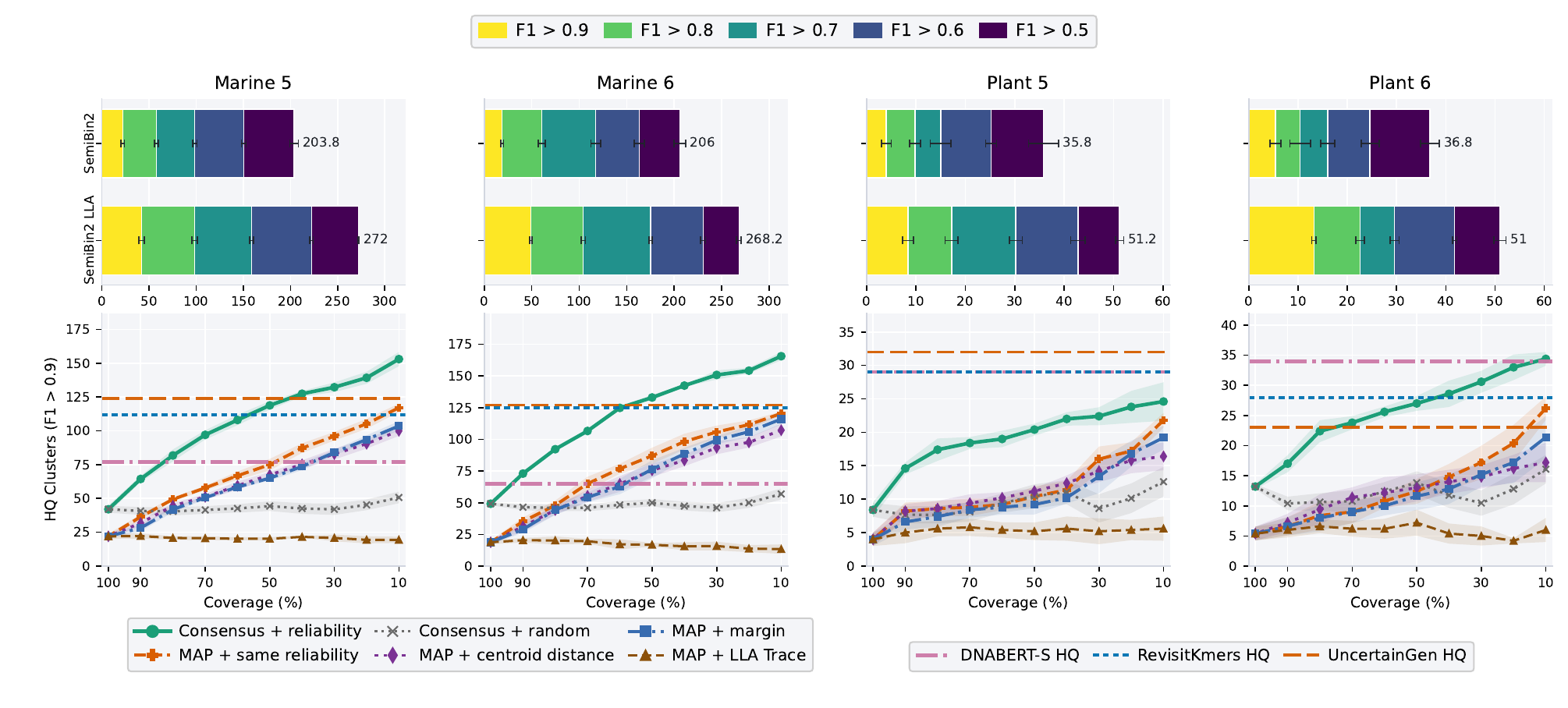}
    \caption{Genome reliable-core results on the Marine and Plant
    splits using EF in place of GGN, with $k=500$ retained for
    k-Means. The top panels compare high-quality cluster counts
    across F1 thresholds for SemiBin2 and the LLA posterior-consensus
    variant. The bottom panels show the number of high-quality
    clusters retained as coverage decreases under the different
    approaches.}
    \label{fig:genome-reliable-cores:ef}
\end{figure}

\section{Additional details on marginal variance and downstream uncertainty}
\label{sec:appendix:marginal:uncertainty:proofs}

In this section, we provide proofs for the propositions in Section \ref{sec:variance}. Also, to provide with visual intuition, we show how these results apply in a very simple example in Figure \ref{fig:assignment_uncertainty} and also for a pairwise score in Figure \ref{fig:score_rank_uncertainty}.

\textbf{Proposition 1. }Let \(h_{\bm{\theta}}\) be a (linearized) contrastive encoder and \(q(\bm{\theta})\) be a (repaired) LLA posterior, then, for any \(\mathbf{x} \in \mathcal{X}\), 
\(
    h_{\bm{\theta}}(\mathbf{x})\mid\mathcal{D}
    \sim
    \mathcal{N}(\boldsymbol{\mu},\boldsymbol{\Sigma})
\). 
Consider a linear binary decision boundary defined as
\(
    D(\mathbf{z})
    :=
    \mathbf{1}\{\mathbf{a}^{\top}\mathbf{z}+b\ge 0\}
\).
Then, denoting the standard gaussian c.d.f. as \(\Phi\), the posterior probability of changing the assignment made at the posterior mean is
\[
    \mathbb{P}\!\left[
        D(h(\mathbf{x}))\neq D(\boldsymbol{\mu})
        \mid\mathcal{D}
    \right]
    =
    \Phi\!\left(
        -
        \frac{
            |\mathbf{a}^{\top}\boldsymbol{\mu}+b|
        }{
            \sqrt{\mathbf{a}^{\top}\boldsymbol{\Sigma}\mathbf{a}}
        }
    \right)\,.
\]

\begin{proof}
Since an affine transformation of a Gaussian vector is Gaussian,
\[
\mathbf{a}^\top h_{\bm{\theta}}(\mathbf{x})+b
\mid\mathcal{D}
\sim
\mathcal{N}\left(
\mathbf{a}^\top\bm{\mu}+b,\,
\mathbf{a}^\top\bm{\Sigma}\mathbf{a}
\right).
\]
Write $m=\mathbf{a}^\top\bm{\mu}+b$ and
$s^2=\mathbf{a}^\top\bm{\Sigma}\mathbf{a}$, and assume $s>0$.
If $m\geq 0$, then $D(\bm{\mu})=1$, so the assignment changes
when the Gaussian score falls below zero. This has probability
$\Phi(-m/s)$. If $m<0$, then $D(\bm{\mu})=0$, and the assignment
changes when the score is nonnegative, which has probability
$1-\Phi(-m/s)=\Phi(m/s)$.

In both cases, the disagreement probability is $\Phi(-|m|/s)$.
Thus,
\[
\mathbb{P}\left[
D(h_{\bm{\theta}}(\mathbf{x}))\neq D(\bm{\mu})
\mid\mathcal{D}
\right]
=
\Phi\left(
-\frac{\left|\mathbf{a}^\top\bm{\mu}+b\right|}
{\sqrt{\mathbf{a}^\top\bm{\Sigma}\mathbf{a}}}
\right).
\]
If $s=0$, the score equals $m$ almost surely, so the
disagreement probability is zero.
\end{proof}

\textbf{Proposition 2. }
Suppose an LLA posterior \(q(\bm{\theta}) = \mathcal{N}(\bm{\theta}^\star, \bm{\Sigma})\), the two posterior (linearized) scores are jointly Gaussian under
\[
\begin{pmatrix}
    s_{\btheta}(\mathbf{x}_i,\mathbf{x}_j)\\
    s_{\btheta}(\mathbf{x}_i,\mathbf{x}_k)
\end{pmatrix} 
\sim 
\mathcal{N}\left(
    \begin{pmatrix}
        \mu_{ij}\\
        \mu_{ik}
    \end{pmatrix},
    \begin{pmatrix}
        V_{ij}     & C_{ij,ik}\\
        C_{ik, ij} & V_{ik}
    \end{pmatrix}
 \right)
\]
For the downstream comparison
\(
    A_i
    =
    \mathbf{1}
    \left\{
        s_{\btheta}(\mathbf{x}_i,\mathbf{x}_j)
        \ge
        s_{\btheta}(\mathbf{x}_i,\mathbf{x}_k)
    \right\}
\).
The probability of changing the score rank w.r.t the MAP is given by
\[
    \mathbb{P}
    \left[
        A_i(\btheta)
        \neq
        A_i(\btheta^{\star})
    \right]
    =
    \Phi\!\left(
        -
        \frac{
            |\mu_{ij}-\mu_{ik}|
        }{
            \sqrt{V_{ij}+V_{ik}-2C_{ij,ik}}
        }
    \right),
\]
\begin{proof}
The two scores are jointly Gaussian, so their difference
\[
\Delta_{\bm{\theta}}
=
s_{\bm{\theta}}(\mathbf{x}_i,\mathbf{x}_j)
-
s_{\bm{\theta}}(\mathbf{x}_i,\mathbf{x}_k)
\]
is Gaussian with mean $m=\mu_{ij}-\mu_{ik}$ and variance
$v=V_{ij}+V_{ik}-2C_{ij,ik}$.

The comparison is determined by the sign of this difference:
\[
A_i(\bm{\theta})
=
\mathbf{1}\{\Delta_{\bm{\theta}}\geq 0\}.
\]
Since the linearized posterior is centered at $\bm{\theta}^\star$,
the score difference at $\bm{\theta}^\star$ equals $m$, giving
\[
A_i(\bm{\theta}^\star)=\mathbf{1}\{m\geq 0\}.
\]

Assume $v>0$. If $m\geq 0$, a rank change occurs when
$\Delta_{\bm{\theta}}<0$, with probability $\Phi(-m/\sqrt{v})$.
If $m<0$, it occurs when $\Delta_{\bm{\theta}}\geq 0$, with
probability $1-\Phi(-m/\sqrt{v})=\Phi(m/\sqrt{v})$.
Both cases therefore give
\[
\mathbb{P}\left[
A_i(\bm{\theta})\neq A_i(\bm{\theta}^\star)
\right]
=
\Phi\left(
-\frac{|\mu_{ij}-\mu_{ik}|}
{\sqrt{V_{ij}+V_{ik}-2C_{ij,ik}}}
\right).
\]
If $v=0$, the score difference equals $m$ almost surely,
so the rank-change probability is zero.
\end{proof}

\begin{figure}
    \centering
    \includegraphics[width=0.95\linewidth]{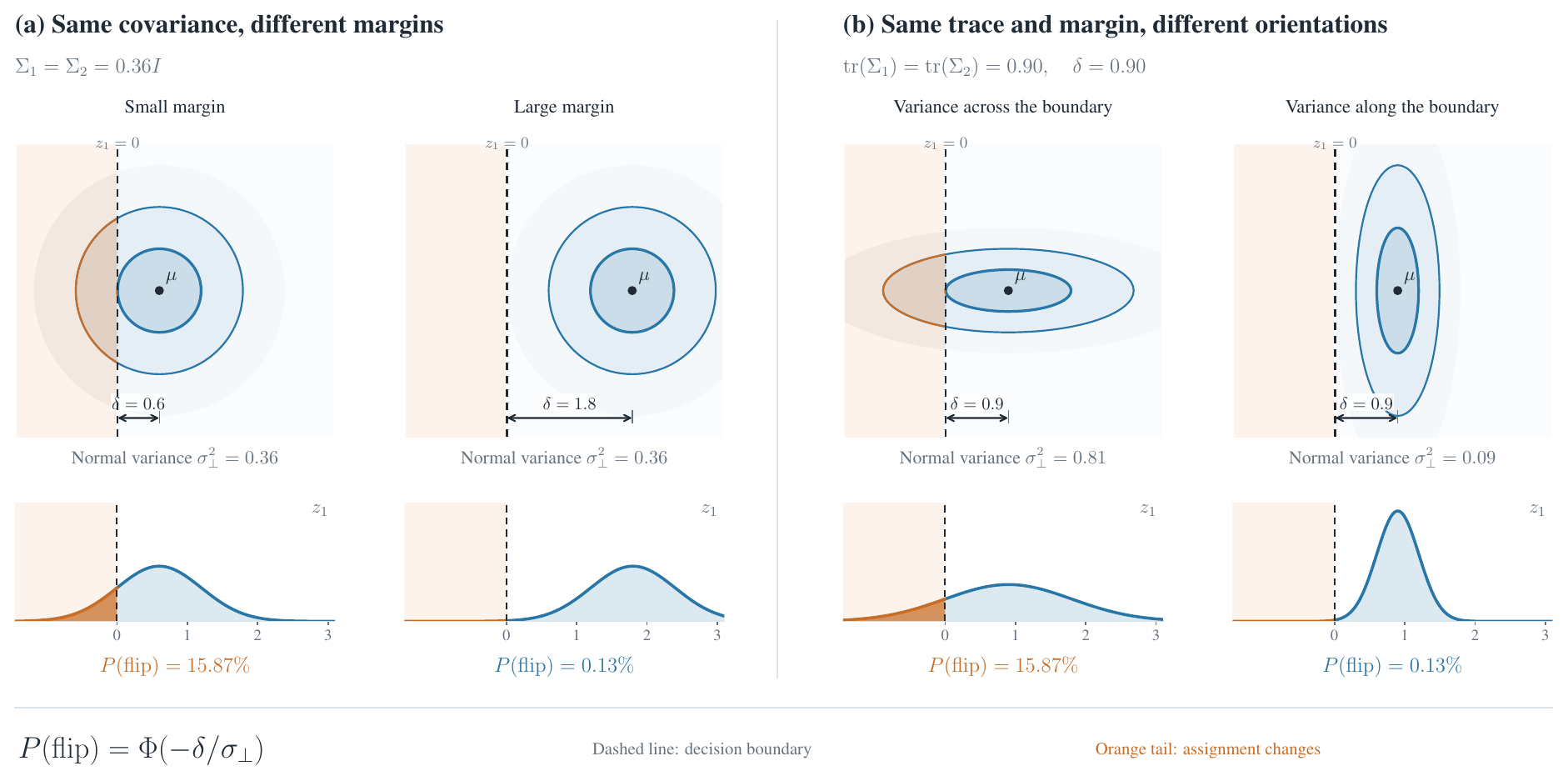}
    \caption{Marginal embedding variance does not determine assignment uncertainty. (a) With identical covariance matrices, a larger margin between the posterior mean and the decision boundary reduces the probability of changing assignment. (b) With identical means and covariance traces, variance perpendicular to the boundary produces greater assignment uncertainty than variance parallel to it. Upper panels show Gaussian embedding posteriors; lower panels show their projections onto the boundary-normal direction. Orange tails indicate assignment changes relative to the posterior mean.}
    \label{fig:assignment_uncertainty}
\end{figure}

\begin{figure}
    \centering
    \includegraphics[width=0.95\linewidth]{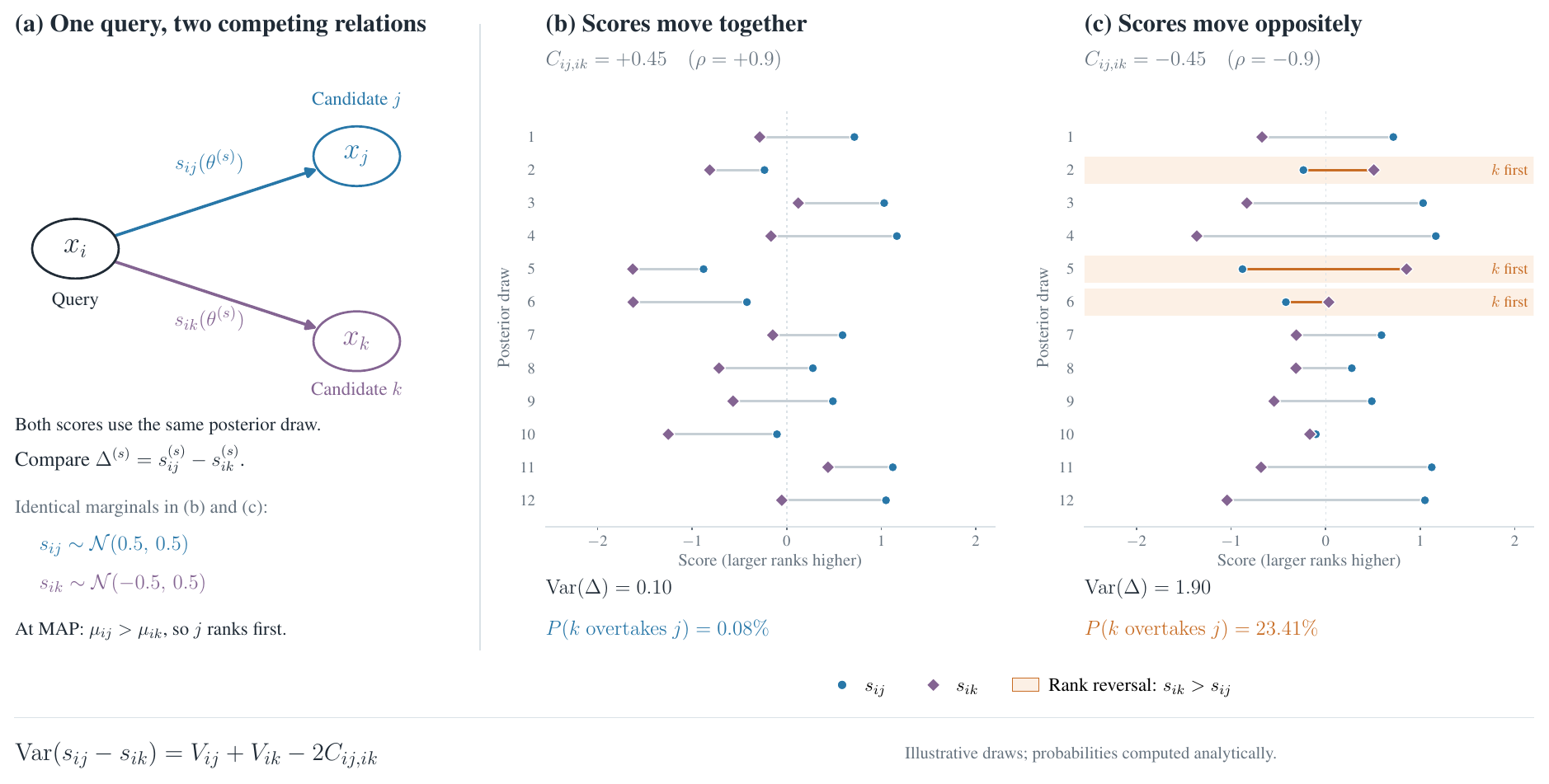}
    \caption{Ranking uncertainty depends on the joint behavior of competing pairwise scores. (a) A query \(x_i\) is compared with candidates \(x_j\) and \(x_k\), with both scores evaluated under the same posterior encoder draw. (b,c) The scores have identical marginal distributions, but positive covariance stabilizes their difference, whereas negative covariance increases ranking uncertainty. Each row connects the two scores from one illustrative draw; orange rows indicate reversals of the MAP ranking. Reported rank-flip probabilities are computed analytically from the joint Gaussian distribution.
}
    \label{fig:score_rank_uncertainty}
\end{figure}

\section{Experimental and Reproducibility Details}
\label{app:experiments}

All experiments use the same posterior-propagation protocol.  We first obtain a MAP contrastive encoder  by loading a pretrained
foundation/self-supervised encoder.  We then fit a last-layer linearized Laplace
approximation (LLA) around the MAP point, sample the last-layer parameters, and apply
the downstream rule separately in each sampled embedding space.  For clustering, the sampled predictions are combined through posterior coassignment probabilities; for classification, the sampled embeddings define either a Monte Carlo predictive readout or a Gaussian-logit readout.  Reliable cores are selected only after this posterior propagation step.

Unless otherwise stated, experiments are repeated over five seeds, and reported aggregate values are means over seeds.  The synthetic two-blob presented in the main text (see Figure~\ref{fig:synthetic:toy})  is an illustrative single-seed experiment.  Ablations use the same architecture and
posterior settings as the corresponding main experiment.

\subsection{Datasets}
\label{app:datasets}

\paragraph{EMNIST Balanced and CIFAR-100.}
  We use EMNIST Balanced \citep{Cohen2017EMNIST}, comprising 47 balanced
  character classes, and CIFAR-100 \citep{krizhevsky2009learning}, comprising 100
  object classes, for classification and curvature-repair ablations. EMNIST
  images are converted to the standard upright orientation and their grayscale
  channel is repeated three times. Images from both datasets are scaled to
  \([0,1]\), resized to \(224\times224\), and normalized using the standard CLIP
  channel-wise statistics. Unless otherwise stated, we use the complete training
  and test splits of both datasets.

\paragraph{DNABERT-S metagenomic panels.}
Genome experiments use the plant and marine panels from the DNABERT-S metagenomic
benchmark \citep{Zhou2024DNABERTS}.  We follow the SemiBin2 binning formulation
\citep{pan2023semibin2}: long contigs define must-link split pairs, and cannot-link pairs
are sampled between contigs.  The calibration split is the corresponding
\(\text{clustering}_0\) panel, while \(\text{binning}_5\) and \(\text{binning}_6\) are
held-out transfer panels. 
The input representation is the 136-dimensional SemiBin2 k-mer feature vector; abundance
and BAM coverage features are not used in this k-mer-only setting.

\paragraph{SICAPv2 pathology.}
SICAPv2 is a public prostate histopathology dataset derived from 155 biopsy whole-slide images from 95 patients \citep{SILVARODRIGUEZ2020105637}. The slides are provided as 18,783 RGB tissue patches of $512\times512$ pixels at $10\times$ magnification, extracted with 50\% overlap; patches containing at most 20\% tissue are excluded. Expert annotations identify non-cancerous tissue and Gleason patterns 3, 4, and 5 at the pixel level. For patch classification, cancerous patches are assigned the majority Gleason pattern in their annotation mask, while non-cancerous patches are drawn from slides classified as non-cancerous, yielding the four labels $\{\mathrm{NC},\mathrm{G3},\mathrm{G4},\mathrm{G5}\}$. We use the official patient-separated test partition, comprising 9,959 training and 2,122 test patches, so patches from the same patient do not occur in both sets.

\subsection{Encoders and Last-Layer Targets}
\label{app:architectures}

Table~\ref{tab:encoder-architectures} summarizes the architectures.  In all cases, the LLA posterior is placed on a final linear map. For encoders taken from prior work, we use the published architecture and weights.

  \begin{table}[t]
  \centering
  \small
  \setlength{\tabcolsep}{4pt}
  \caption{Frozen encoder architectures and linear layers targeted by LLA.
  No feature extractor is trained or fine-tuned in any reported experiment.}
  \label{tab:encoder-architectures}
  \begin{tabular}{p{0.17\linewidth}p{0.34\linewidth}p{0.22\linewidth}p{0.18\linewidth}}
  \toprule
  Setting & Frozen encoder & LLA target & Embedding used downstream \\
  \midrule
  CLIP image experiments and ablations &
  Pretrained CLIP ViT-B/32 visual tower \citep{radford2021learning}, including its
  learned visual projection. Post-projection normalization is not applied before
  the LLA target. &
  CLIP visual projection, a linear map from hidden width 768 to projection
  dimension 512. &
  512-dimensional CLIP visual-projection embedding. \\
  \addlinespace
  SimCLR image experiments &
  Pretrained ImageNet SimCLRv2 ResNet-50 \(1\times\), SK0 checkpoint
  \citep{chen2020big}. Inputs are resized to \(224\times224\), and the converted
  ResNet projection output is retained as the LLA target. &
  Final ResNet \(fc\) layer. &
  Output of the final linear layer. \\
  \addlinespace
  SemiBin2 genome experiments &
  Pretrained SemiBin2 global-environment encoder \citep{pan2023semibin2},
  evaluated on 136-dimensional k-mer features. &
  Last linear layer of the SemiBin2 encoder. &
  SemiBin2 contig embedding. \\
  \addlinespace
  SICAPv2/CONCH pathology &
  Pretrained CONCH-v1.5 vision encoder \citep{Lu2024CONCH}. The LLA implementation
  exposes the final attention-pooling output map before the contrastive layer
  normalization. &
  Final linear output map of the CONCH attention pool. &
  CONCH patch embedding before the final normalization used by the adapter. \\
  \bottomrule
  \end{tabular}
  \end{table}

\subsection{Training and Posterior Fitting}
\label{app:training-posterior}

\paragraph{Frozen-CLIP loss-and-repair ablation.}
We study curvature repair on CIFAR-100 using a frozen pretrained CLIP
ViT-B/32 visual encoder, without post-projection embedding normalization. KFAC GGN curvature is fitted on the final linear projection using either the distance-Bernoulli loss or the contrastive margin loss, each combined with spectral or radial-only repair. We accumulate at most 64 curvature batches of 16 supervised pairs, sampled with equal positive and negative
proportions. Radial-only repair discards the tangential component, with
$\epsilon_{\mathrm{tan}}=0$. All four configurations are evaluated over
five random seeds.

We reserve 50 images per class, excluding them from curvature fitting and classifier training in all four configurations. For the margin loss, $m$ is set to the 25th percentile of negative-pair distances on this held-out subset. The same validation pairs and selected margin are used for both repair modes; the margin is not selected by classification performance.

For posterior-predictive classification, the prior precision is selected from \(\{1,10^2,10^4\}\) jointly with the logistic-classifier regularization parameter \(C\in\{1,10^3,10^4\}\). Selection uses a second, disjoint balanced validation subset of 50 images per class drawn from the remaining training images. For each value of \(C\), a logistic classifier is fitted to the MAP embeddings of the corresponding training partition and held fixed while each candidate prior is evaluated using four posterior samples. Candidate pairs are ranked by validation negative log-likelihood. After selection, we draw 64 posterior samples. For each sample, we compute the corresponding training and test embeddings, fit a separate logistic classifier using the selected \(C\), and obtain test-set probabilities. Final posterior-predictive probabilities are computed by averaging across the 64 independently fitted classifiers.

\paragraph{CLIP and SimCLR image experiments.}
The pretrained image backbones are kept frozen, and KFAC factors are fitted on
supervised InfoNCE candidate-set batches from the training split. The candidate
set size is 10, the curvature batch size is 64, and at most 64 curvature
batches are accumulated. We use temperature \(0.07\) for CLIP and \(0.1\)
for SimCLR, and draw 16 posterior samples for the final evaluation. For
classification, the prior precision is selected from
\(\{1,10^2,10^4\}\), jointly with
\(C\in\{1,10^3,10^4\}\), using a balanced training-only validation subset.
Each candidate is evaluated using four posterior samples and selected
primarily by validation negative log-likelihood. For clustering, the prior
precision is selected from
\(\{10^3,10^4,10^5,10^6\}\) using the number of high-quality clusters
obtained by posterior-consensus clustering on the training split, with
macro-F1 and Hungarian-aligned accuracy as tie-breakers. We evaluate both
empirical-Fisher KFAC and GGN KFAC. Neither configuration requires a curvature
repair because the InfoNCE output Hessian is positive semidefinite.

\paragraph{SemiBin2 genome experiments.}
The pretrained SemiBin2 encoder is kept fixed, and KFAC is fitted on its final
encoder linear layer. Each KFAC fit uses 64 curvature batches, a positive batch
size of 512, one randomly sampled cannot-link pair per positive pair, and
margin \(m=1\). The prior-precision grid is
\(\{10^3,10^4,10^5,10^6\}\). Prior selection is performed separately for
each source dataset, random seed, and KFAC configuration on the corresponding
\texttt{clustering\_0} calibration panel. The selected posterior is then
transferred unchanged to \texttt{binning\_5} and \texttt{binning\_6}.
The experiments use 20 K-means initializations, and draw
32 posterior samples. Because the SemiBin2 objective is a vector-difference
margin loss, the GGN configurations use either spectral repair or radial-only
repair; empirical-Fisher KFAC with no repair is included as a positive
semidefinite alternative. Also, following Metabat and DNABERT \citep{Zhou2024DNABERTS, kang2015metabat}, we only keep sequences longer than 2500 bp and we filter out species with fewer than 10 sequences.

\paragraph{SICAPv2 pathology classification.}
The CLIP and CONCH-v1.5 pathology backbones are kept frozen, and KFAC is fitted
on the corresponding final linear projection. CLIP is evaluated at
\(224\times224\) resolution using CLIP normalization, whereas CONCH-v1.5 is
evaluated at \(448\times448\) resolution using ImageNet normalization. Both
models use supervised InfoNCE candidate sets of size 8 and accumulate at most
256 curvature batches. The curvature batch size is 16 for CLIP and 4 for
CONCH; the corresponding temperatures are \(0.07\) and \(0.1\). The prior
precision is selected from \(\{1,10^2,10^4\}\), jointly with
\(C\in\{1,10^3,10^4\}\), on a balanced training-only validation subset using
four posterior samples per candidate. Final classification results use 16
posterior samples. We evaluate empirical-Fisher KFAC and GGN KFAC without
repair, with damping \(10^{-4}\) and jitter \(10^{-6}\).    

\subsection{Downstream Rules and Metrics}
\label{app:downstream-metrics}

\paragraph{Clustering.}
MAP clustering applies K-means in the MAP embedding space with \(K=500\) except for the K-ablation, where we vary this value.  Posterior clustering draws embeddings from the LLA posterior, reruns the same K-means rule for each
draw, and forms a posterior-consensus clustering from the coassignment matrix.  

Cluster labels are evaluated by Hungarian alignment to ground-truth labels.  We report mainly the number of High-Quality clusters in the experimental section, and also macro-F1 and ARI in the additional Genome Curvature comparison (see Figure~\ref{fig:genome-curvature-differences-k-500}).  A high-quality cluster is one whose aligned retained-set F1 exceeds the stated threshold, usually 0.9.  Figure-style stacked bars count clusters with F1 greater than 0.5, 0.6, 0.7, 0.8, and 0.9.

\paragraph{Classification.}
For image classification, the MAP readout is multinomial logistic regression trained on
MAP embeddings.  The sampled-LLA readout trains the same logistic classifier in each
sampled embedding space and averages predictive probabilities.  The Gaussian-logit readout
trains a single downstream logistic model against the Gaussian embedding distribution
induced by the last-layer posterior, then averages predictive probabilities under that
distribution.  Classification metrics are top-1 accuracy, negative log likelihood, and 15-bin top-label expected calibration error.

\paragraph{Reliable cores.}
For clustering, posterior reliability for an item is the average posterior agreement between
that item and the other items it has been clustered with.  Reliable-core curves retain either
the globally highest-reliability items or the highest-reliability items within each predicted
cluster.  The main coverage curves use the per-cluster rule.  Baselines include pointwise
embedding-trace rejection, MAP centroid similarity for genome coverage curves, and repeated
random rejection. 

\section{Additional Clustering Experimental Results}
\label{sec:additional-clustering-results}

\begin{table}[t]
  \centering
  \caption{Numbers of clusters exceeding each aligned F1 threshold for direct SemiBin2 MAP assignments and posterior-consensus solutions, using $K=500$ from the cluster-count ablation. Values are mean $\pm$ standard deviation over five seeds.}
  \label{tab:genome-f1-thresholds-k-500}
  \resizebox{\textwidth}{!}{%
  \begin{tabular}{lrrrrr}
    \toprule
    Method & F1 $> 0.9$ & F1 $> 0.8$ & F1 $> 0.7$ & F1 $> 0.6$ & F1 $> 0.5$ \\
    \midrule
    \multicolumn{6}{l}{\textbf{Marine 5}} \\
    MAP & $20.4 \pm 1.1$ & $56.8 \pm 2.2$ & $97.6 \pm 3.4$ & $150.8 \pm 3.4$ & $204.8 \pm 5.3$ \\
    Consensus, EF & $42.0 \pm 2.8$ & $98.4 \pm 2.9$ & $158.8 \pm 1.9$ & $222.2 \pm 1.9$ & $272.0 \pm 0.7$ \\
    Consensus, GGN/radial & $42.8 \pm 1.5$ & $97.0 \pm 1.4$ & $156.4 \pm 1.9$ & $217.2 \pm 2.7$ & $268.2 \pm 3.2$ \\
    Consensus, GGN/spectral & $42.6 \pm 2.1$ & $97.8 \pm 1.9$ & $159.0 \pm 0.7$ & $220.8 \pm 3.7$ & $268.2 \pm 2.3$ \\
    \midrule
    \multicolumn{6}{l}{\textbf{Marine 6}} \\
    MAP & $18.6 \pm 0.9$ & $59.4 \pm 1.5$ & $114.2 \pm 5.1$ & $161.2 \pm 6.0$ & $204.0 \pm 7.0$ \\
    Consensus, EF & $49.2 \pm 1.3$ & $104.2 \pm 2.4$ & $175.0 \pm 1.6$ & $230.4 \pm 2.7$ & $268.2 \pm 2.5$ \\
    Consensus, GGN/radial & $50.4 \pm 3.5$ & $106.0 \pm 4.8$ & $176.6 \pm 3.0$ & $229.8 \pm 2.3$ & $269.8 \pm 3.0$ \\
    Consensus, GGN/spectral & $47.4 \pm 1.7$ & $104.2 \pm 2.9$ & $174.4 \pm 3.5$ & $230.0 \pm 2.5$ & $267.8 \pm 1.9$ \\
    \midrule
    \multicolumn{6}{l}{\textbf{Plant 5}} \\
    MAP & $4.4 \pm 1.1$ & $10.2 \pm 1.5$ & $16.0 \pm 2.0$ & $24.2 \pm 0.8$ & $36.6 \pm 2.7$ \\
    Consensus, EF & $8.4 \pm 1.1$ & $17.2 \pm 1.3$ & $30.2 \pm 1.3$ & $42.8 \pm 1.5$ & $51.2 \pm 0.8$ \\
    Consensus, GGN/radial & $9.2 \pm 0.8$ & $17.6 \pm 1.1$ & $28.8 \pm 1.1$ & $42.6 \pm 2.6$ & $51.8 \pm 2.5$ \\
    Consensus, GGN/spectral & $8.8 \pm 1.3$ & $18.6 \pm 0.5$ & $31.2 \pm 1.3$ & $42.0 \pm 1.2$ & $52.0 \pm 1.6$ \\
    \midrule
    \multicolumn{6}{l}{\textbf{Plant 6}} \\
    MAP & $4.6 \pm 1.9$ & $10.0 \pm 2.3$ & $15.4 \pm 0.9$ & $24.2 \pm 1.3$ & $36.2 \pm 2.6$ \\
    Consensus, EF & $13.2 \pm 0.4$ & $22.6 \pm 0.9$ & $29.6 \pm 0.9$ & $41.8 \pm 0.4$ & $51.0 \pm 1.2$ \\
    Consensus, GGN/radial & $13.2 \pm 0.4$ & $23.4 \pm 0.5$ & $30.0 \pm 1.2$ & $41.4 \pm 1.1$ & $50.8 \pm 2.0$ \\
    Consensus, GGN/spectral & $13.0 \pm 0.7$ & $23.0 \pm 1.4$ & $29.6 \pm 1.1$ & $40.8 \pm 1.3$ & $50.2 \pm 0.8$ \\
    \bottomrule
  \end{tabular}
  }
\end{table}

\begin{table}[t]
  \centering
  \caption{Numbers of high-quality clusters (aligned F1 $>0.9$) retained at selected coverage levels under consensus and MAP selection rules, using $K=500$ from the cluster-count ablation. Values are mean $\pm$ standard deviation over five seeds.}
  \label{tab:genome-hq-coverage-k-500}
  \resizebox{\textwidth}{!}{%
  \begin{tabular}{lrrrrr}
    \toprule
    Selector & 90\% & 70\% & 50\% & 30\% & 10\% \\
    \midrule
    \multicolumn{6}{l}{\textbf{Marine 5}} \\
    Consensus + rel. (EF) & $64.4 \pm 2.3$ & $97.0 \pm 3.4$ & $118.8 \pm 3.1$ & $132.2 \pm 3.6$ & $153.0 \pm 5.2$ \\
    Consensus + rel. (GGN/radial) & $63.6 \pm 2.3$ & $96.0 \pm 3.1$ & $114.6 \pm 2.3$ & $133.2 \pm 4.4$ & $152.6 \pm 8.6$ \\
    Consensus + rel. (GGN/spectral) & $65.0 \pm 1.4$ & $95.6 \pm 3.1$ & $118.4 \pm 1.9$ & $136.0 \pm 2.3$ & $154.0 \pm 3.7$ \\
    Consensus + random (EF) & $40.8 \pm 2.8$ & $41.5 \pm 2.4$ & $44.3 \pm 3.1$ & $42.0 \pm 2.9$ & $50.7 \pm 2.0$ \\
    Consensus + random (GGN/radial) & $41.1 \pm 0.8$ & $42.1 \pm 1.3$ & $44.3 \pm 1.4$ & $42.5 \pm 0.9$ & $50.8 \pm 1.5$ \\
    Consensus + random (GGN/spectral) & $40.5 \pm 1.4$ & $41.2 \pm 1.7$ & $44.6 \pm 1.3$ & $41.9 \pm 1.6$ & $50.4 \pm 1.4$ \\
    MAP + same reliability & $34.6 \pm 3.8$ & $56.6 \pm 1.5$ & $76.2 \pm 2.2$ & $99.8 \pm 4.9$ & $120.2 \pm 3.1$ \\
    MAP + centroid distance & $30.2 \pm 2.3$ & $49.2 \pm 3.3$ & $66.8 \pm 2.4$ & $84.0 \pm 3.4$ & $99.2 \pm 1.1$ \\
    MAP + margin & $27.6 \pm 2.1$ & $50.8 \pm 2.2$ & $65.0 \pm 2.5$ & $84.6 \pm 2.4$ & $105.4 \pm 8.4$ \\
    MAP + LLA Trace & $20.8 \pm 1.1$ & $20.0 \pm 1.9$ & $19.6 \pm 1.5$ & $18.2 \pm 1.6$ & $16.4 \pm 1.1$ \\
    \midrule
    \multicolumn{6}{l}{\textbf{Marine 6}} \\
    Consensus + rel. (EF) & $73.0 \pm 1.9$ & $106.6 \pm 1.3$ & $133.0 \pm 2.0$ & $150.8 \pm 3.6$ & $165.6 \pm 3.3$ \\
    Consensus + rel. (GGN/radial) & $76.2 \pm 1.6$ & $110.8 \pm 3.3$ & $135.6 \pm 3.0$ & $153.2 \pm 3.2$ & $165.8 \pm 5.7$ \\
    Consensus + rel. (GGN/spectral) & $75.4 \pm 2.3$ & $108.4 \pm 4.0$ & $132.2 \pm 2.6$ & $150.2 \pm 2.2$ & $165.8 \pm 5.8$ \\
    Consensus + random (EF) & $46.5 \pm 1.9$ & $46.0 \pm 1.6$ & $50.0 \pm 1.3$ & $46.0 \pm 1.3$ & $56.9 \pm 1.7$ \\
    Consensus + random (GGN/radial) & $47.5 \pm 2.6$ & $46.9 \pm 2.2$ & $51.0 \pm 2.7$ & $47.6 \pm 1.6$ & $57.6 \pm 2.0$ \\
    Consensus + random (GGN/spectral) & $44.7 \pm 1.8$ & $44.5 \pm 1.9$ & $48.6 \pm 2.4$ & $45.0 \pm 1.8$ & $55.1 \pm 2.1$ \\
    MAP + same reliability & $34.4 \pm 2.6$ & $62.8 \pm 2.4$ & $89.2 \pm 5.4$ & $103.4 \pm 3.4$ & $119.8 \pm 2.7$ \\
    MAP + centroid distance & $32.0 \pm 2.7$ & $54.2 \pm 3.5$ & $72.0 \pm 3.2$ & $92.0 \pm 3.7$ & $106.8 \pm 3.3$ \\
    MAP + margin & $30.2 \pm 3.0$ & $53.2 \pm 1.3$ & $76.0 \pm 4.4$ & $96.2 \pm 2.7$ & $112.6 \pm 3.2$ \\
    MAP + LLA Trace & $19.8 \pm 1.1$ & $18.2 \pm 0.8$ & $16.8 \pm 1.1$ & $15.8 \pm 1.5$ & $14.4 \pm 3.8$ \\
    \midrule
    \multicolumn{6}{l}{\textbf{Plant 5}} \\
    Consensus + rel. (EF) & $14.6 \pm 0.9$ & $18.4 \pm 0.9$ & $20.4 \pm 0.9$ & $22.4 \pm 1.3$ & $24.6 \pm 2.9$ \\
    Consensus + rel. (GGN/radial) & $14.2 \pm 1.3$ & $18.2 \pm 0.8$ & $21.0 \pm 1.7$ & $22.6 \pm 0.9$ & $24.2 \pm 1.3$ \\
    Consensus + rel. (GGN/spectral) & $15.4 \pm 1.7$ & $19.0 \pm 1.0$ & $21.6 \pm 1.5$ & $22.8 \pm 1.8$ & $26.2 \pm 1.3$ \\
    Consensus + random (EF) & $7.7 \pm 0.7$ & $8.1 \pm 1.1$ & $10.4 \pm 1.0$ & $8.6 \pm 1.2$ & $12.6 \pm 1.0$ \\
    Consensus + random (GGN/radial) & $7.9 \pm 0.8$ & $8.3 \pm 0.8$ & $11.0 \pm 0.9$ & $7.8 \pm 0.5$ & $13.1 \pm 0.7$ \\
    Consensus + random (GGN/spectral) & $8.5 \pm 1.1$ & $8.8 \pm 1.0$ & $11.5 \pm 1.0$ & $8.4 \pm 0.8$ & $13.0 \pm 0.8$ \\
    MAP + same reliability & $8.8 \pm 1.8$ & $9.4 \pm 1.5$ & $11.4 \pm 1.7$ & $15.8 \pm 1.6$ & $21.8 \pm 2.8$ \\
    MAP + centroid distance & $9.0 \pm 1.6$ & $9.8 \pm 2.2$ & $11.6 \pm 0.9$ & $14.4 \pm 0.9$ & $16.4 \pm 1.1$ \\
    MAP + margin & $6.0 \pm 1.2$ & $8.8 \pm 1.9$ & $10.0 \pm 1.4$ & $13.6 \pm 2.5$ & $19.6 \pm 2.4$ \\
    MAP + LLA Trace & $4.8 \pm 1.5$ & $6.0 \pm 1.4$ & $5.6 \pm 1.5$ & $5.8 \pm 1.5$ & $5.8 \pm 1.8$ \\
    \midrule
    \multicolumn{6}{l}{\textbf{Plant 6}} \\
    Consensus + rel. (EF) & $17.0 \pm 1.4$ & $23.8 \pm 1.1$ & $27.0 \pm 1.2$ & $30.6 \pm 1.8$ & $34.4 \pm 1.1$ \\
    Consensus + rel. (GGN/radial) & $17.2 \pm 0.8$ & $25.6 \pm 0.9$ & $27.6 \pm 0.9$ & $32.0 \pm 1.6$ & $35.0 \pm 1.9$ \\
    Consensus + rel. (GGN/spectral) & $17.4 \pm 1.5$ & $25.6 \pm 0.5$ & $27.6 \pm 0.9$ & $31.8 \pm 1.8$ & $35.4 \pm 1.1$ \\
    Consensus + random (EF) & $10.4 \pm 0.7$ & $10.8 \pm 0.9$ & $13.8 \pm 0.9$ & $10.5 \pm 0.5$ & $16.1 \pm 0.4$ \\
    Consensus + random (GGN/radial) & $10.6 \pm 1.1$ & $11.2 \pm 1.0$ & $14.3 \pm 0.3$ & $11.0 \pm 0.4$ & $16.2 \pm 0.3$ \\
    Consensus + random (GGN/spectral) & $10.1 \pm 1.4$ & $10.5 \pm 1.1$ & $14.1 \pm 0.6$ & $11.1 \pm 0.5$ & $16.2 \pm 0.5$ \\
    MAP + same reliability & $6.0 \pm 1.9$ & $8.4 \pm 2.8$ & $12.6 \pm 2.1$ & $18.2 \pm 1.6$ & $25.8 \pm 2.3$ \\
    MAP + centroid distance & $6.8 \pm 1.8$ & $11.0 \pm 2.0$ & $12.8 \pm 1.9$ & $15.8 \pm 2.8$ & $18.2 \pm 3.6$ \\
    MAP + margin & $6.0 \pm 1.9$ & $8.2 \pm 2.2$ & $11.8 \pm 2.3$ & $17.0 \pm 2.0$ & $22.4 \pm 2.3$ \\
    MAP + LLA Trace & $5.2 \pm 2.2$ & $5.6 \pm 2.1$ & $6.4 \pm 2.5$ & $4.8 \pm 1.6$ & $5.2 \pm 1.8$ \\
    \bottomrule
  \end{tabular}
  }
\end{table}

\begin{figure}[t]
  \centering
  \includegraphics[width=\textwidth]{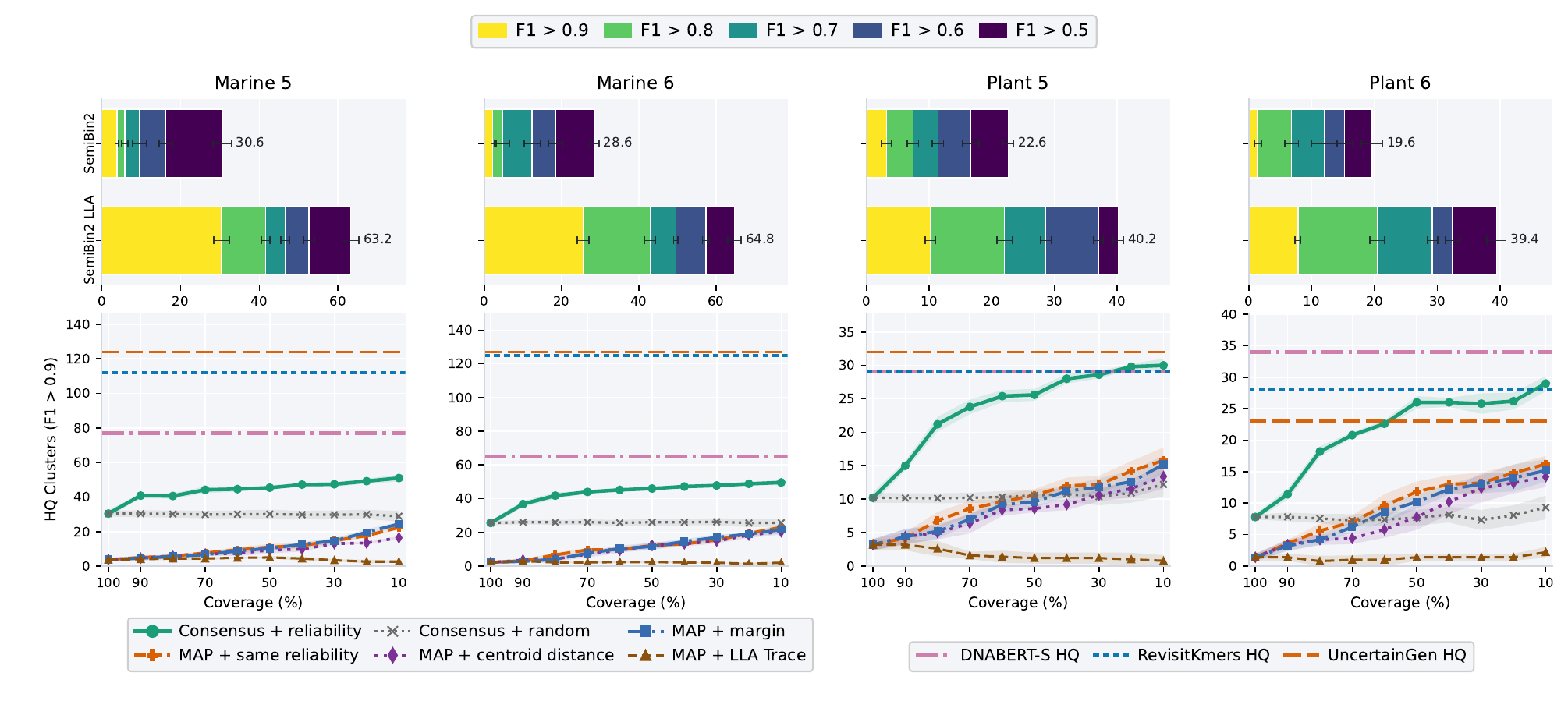}
  \caption{Genome reliable-core results on Marine and Plant splits, using $K=100$ in k-Means, and using the GGN approximation with spectral repair. Top panels compare high-quality cluster counts for SemiBin2 and the LLA posterior-consensus variant across F1 thresholds. Bottom panels show high-quality clusters retained as coverage is reduced with different approaches. }
  \label{fig:genome:k:100}
\end{figure}

\begin{figure}[t]
  \centering
  \includegraphics[width=\textwidth]{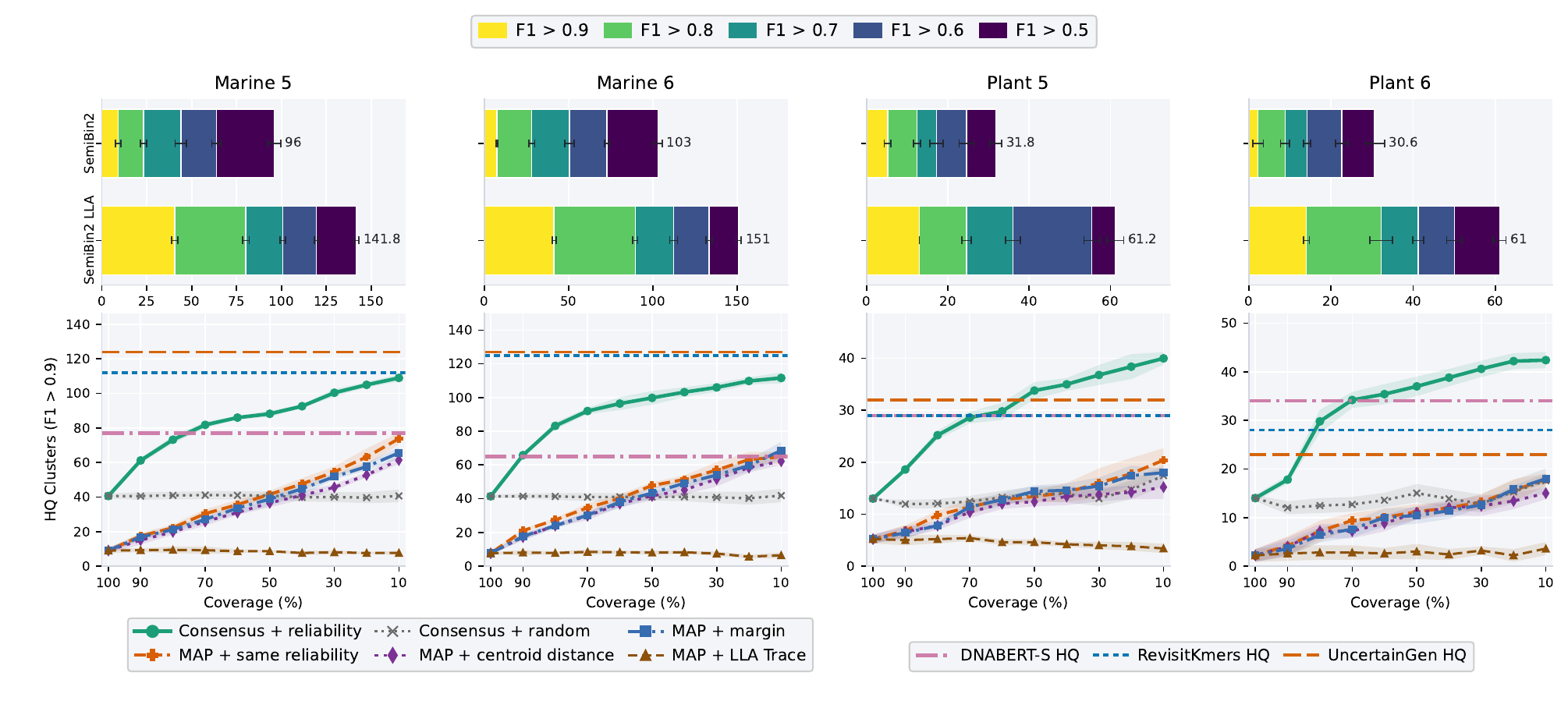}
  \caption{Genome reliable-core results on Marine and Plant splits, using $K=200$ in k-Means, and using the GGN approximation with spectral repair. Top panels compare high-quality cluster counts for SemiBin2 and the LLA posterior-consensus variant across F1 thresholds. Bottom panels show high-quality clusters retained as coverage is reduced with different approaches. }
  \label{fig:genome:k:200}
\end{figure}

\begin{figure}[t]
  \centering
  \includegraphics[width=\textwidth]{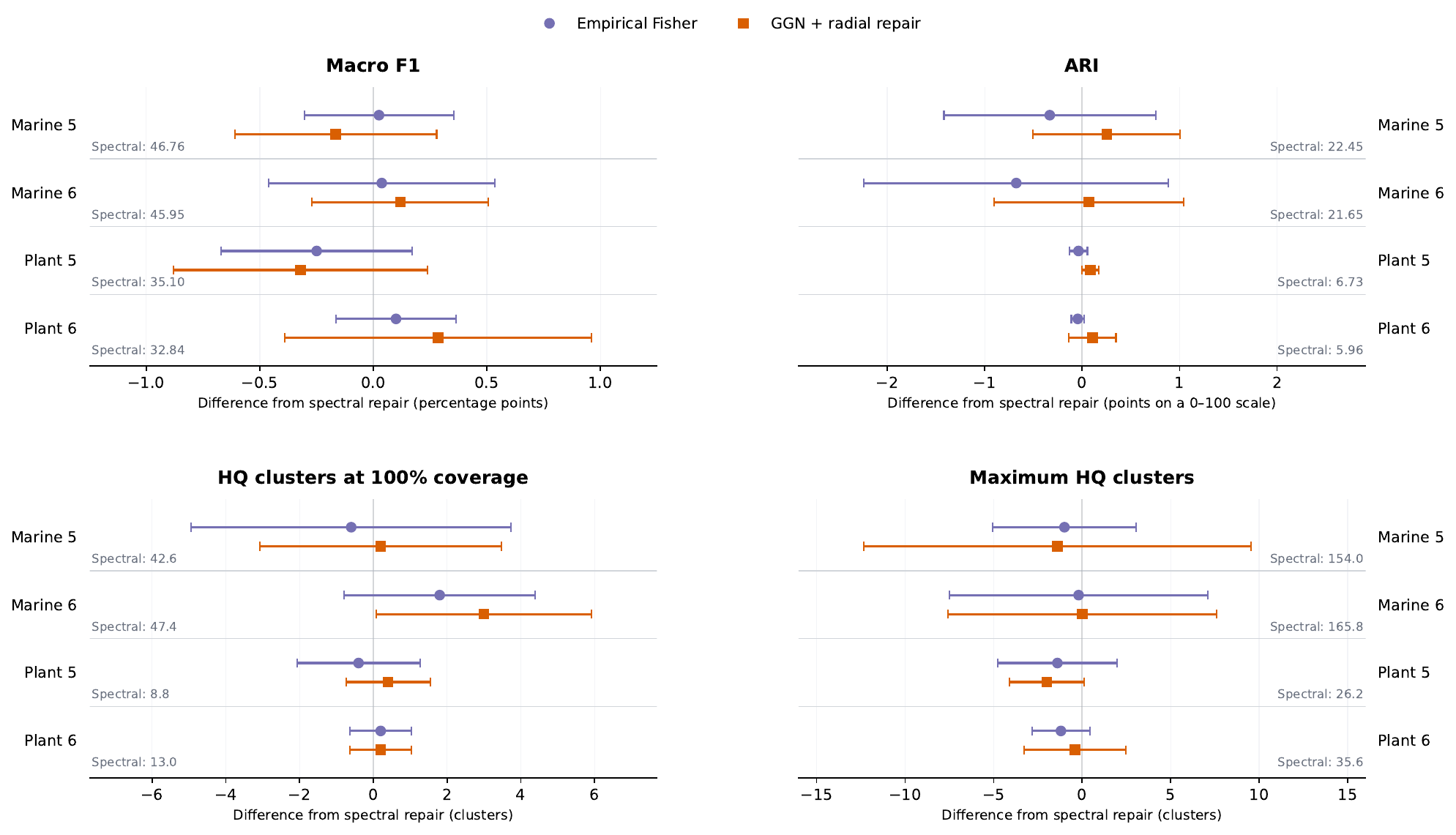}
  \caption{Genome curvature comparison at $K=500$, using GGN with spectral repair as the reference. Points show mean paired differences (alternative minus spectral) across five seeds; error bars indicate one standard deviation of those differences. Macro F1 and ARI differences are expressed on a 0--100 scale, and HQ differences are expressed in cluster counts. Maximum HQ counts are computed separately for each seed over the 10--100\% coverage grid using consensus-based reliability selection before taking paired differences.}
  \label{fig:genome-curvature-differences-k-500}
\end{figure}

We provide additional metagenomic binning clustering results performed on the main text of the paper. Tables \ref{tab:genome-f1-thresholds-k-500} and \ref{tab:genome-hq-coverage-k-500} provide numerical results to Figure~\ref{fig:genome-reliable-cores}.  Figures \ref{fig:genome:k:100} and \ref{fig:genome:k:200} show the results of the task when using $K=100$ and $K=200$, respectively.

Figure~\ref{fig:genome-curvature-differences-k-500} isolates the differences between the curvature configurations and repairs. The figure shows the difference between the results obtained by  GGN with spectral repair, with EF and GGN with radial repair, respectively. Notice that most mean differences lie close to zero: absolute Macro F1 differences remain below 0.33 percentage points, ARI differences remain below 0.69 points on the 0--100 scale, and full-coverage HQ differences are at most three clusters. Mean maximum HQ counts differ by at most two clusters. These results indicate broadly comparable clustering outcomes across the configurations, with the largest differences occurring in full-coverage HQ counts on Marine~6.

\section{Ablation Studies}
\label{app:ablations}

\paragraph{Number of posterior samples.}
The sample-count ablation varies
\[
S\in\{1,2,4,8,16,32,64\}.
\]
For images, this grid is run over both datasets, both pretrained backbones, see the results in Figure~\ref{fig:image:posterior:sample:ablation}.

Increasing the number of posterior samples primarily improves predictive uncertainty rather than classification accuracy. On CIFAR-100, NLL and ECE improve consistently as more samples are averaged, with most of the gain obtained by approximately $16$ or $32$ samples and only modest changes thereafter. Accuracy also improves slightly once multiple samples are used. On EMNIST, the MAP classifier
is already strong and posterior averaging produces comparatively small
changes, but NLL and ECE generally remain improved. The empirical-Fisher and GGN results exhibit similar convergence patterns.

For genome binning, the same grid is run for the SemiBin2 Plant/Marine
fit-transfer protocol. In this case, since the margin loss is used, we present the results using spectral and radial repairs, and also empirical fisher.  The results of this experiments are presented in Figure~\ref{fig:clustering:posterior:sample:ablation}. In this experiment, we also observe an improvement in the metrics when we increase the number of posterior samples, stabilizing in most of the cases when using $8$ samples.

\paragraph{Number of clusters in genome K-means.}
The K-ablation is restricted to SemiBin2 genome K-means.  Instead of using oracle \(K\),
we evaluate
\[
K\in\{100,200,500\}.
\]
The posterior sample count remains 32.  This ablation tests whether posterior reliability and high-quality-cluster counts remain stable when the downstream clustering rule is moved away from the true number of reference labels. The results, displayed in Figure~\ref{fig:semibin2-k-ablation},  show that while in the Marine splits increasing the number of clusters $K$ leads to an improvement in the number of HQ clusters, they do not improve in the Plant splits when increasing $K$ from $200$ to $500$.

\begin{figure}
    \centering
    \includegraphics[width=\linewidth]{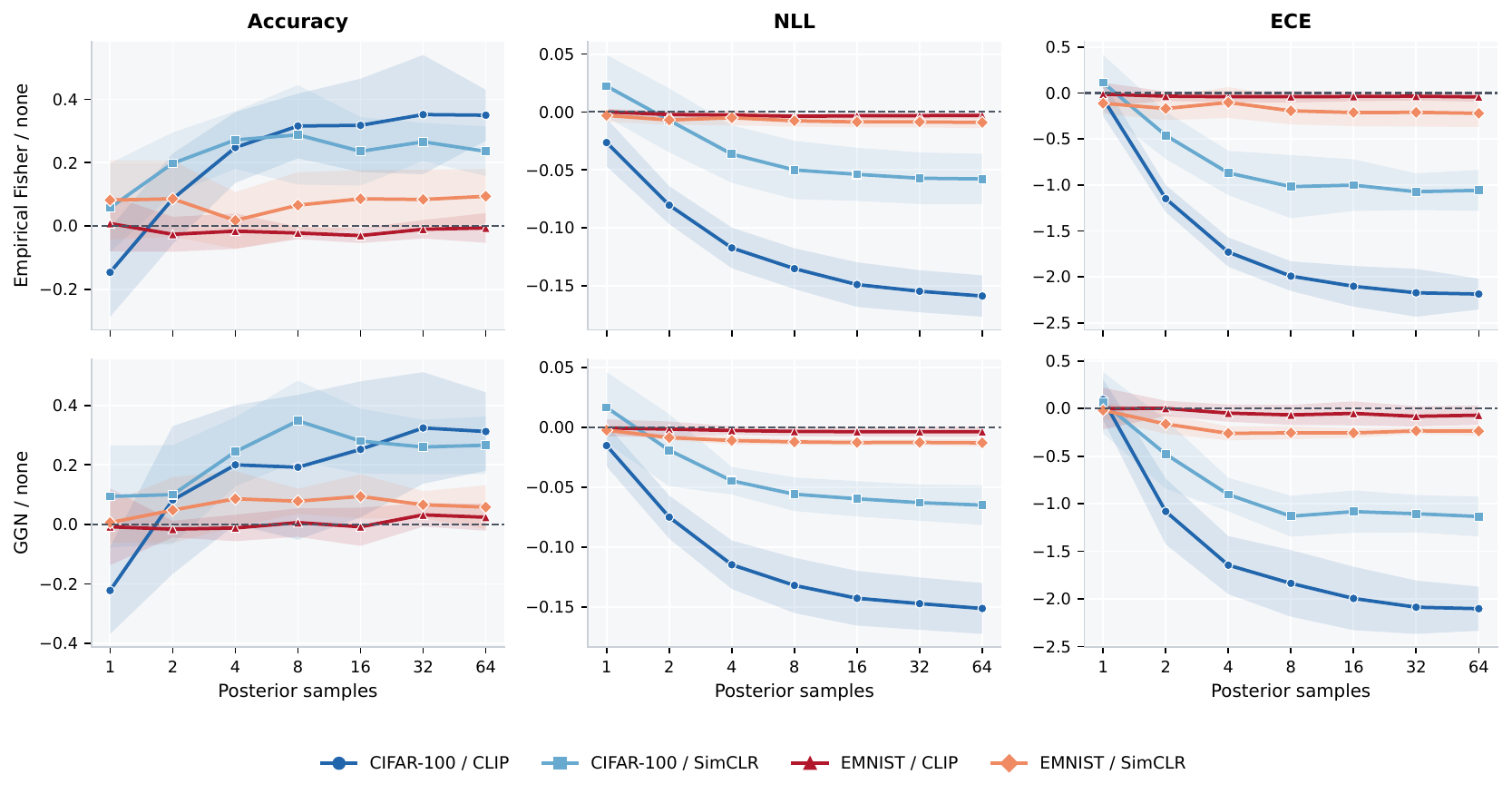}
    \caption{Image classification posterior-sample ablation. We report the results as the difference between the performance of our proposal and that of the MAP solution. No repair step is required in these experiments, as the InfoNCE loss is used.}
    \label{fig:image:posterior:sample:ablation}
\end{figure}

\begin{figure}
    \centering
    \includegraphics[width=\linewidth]{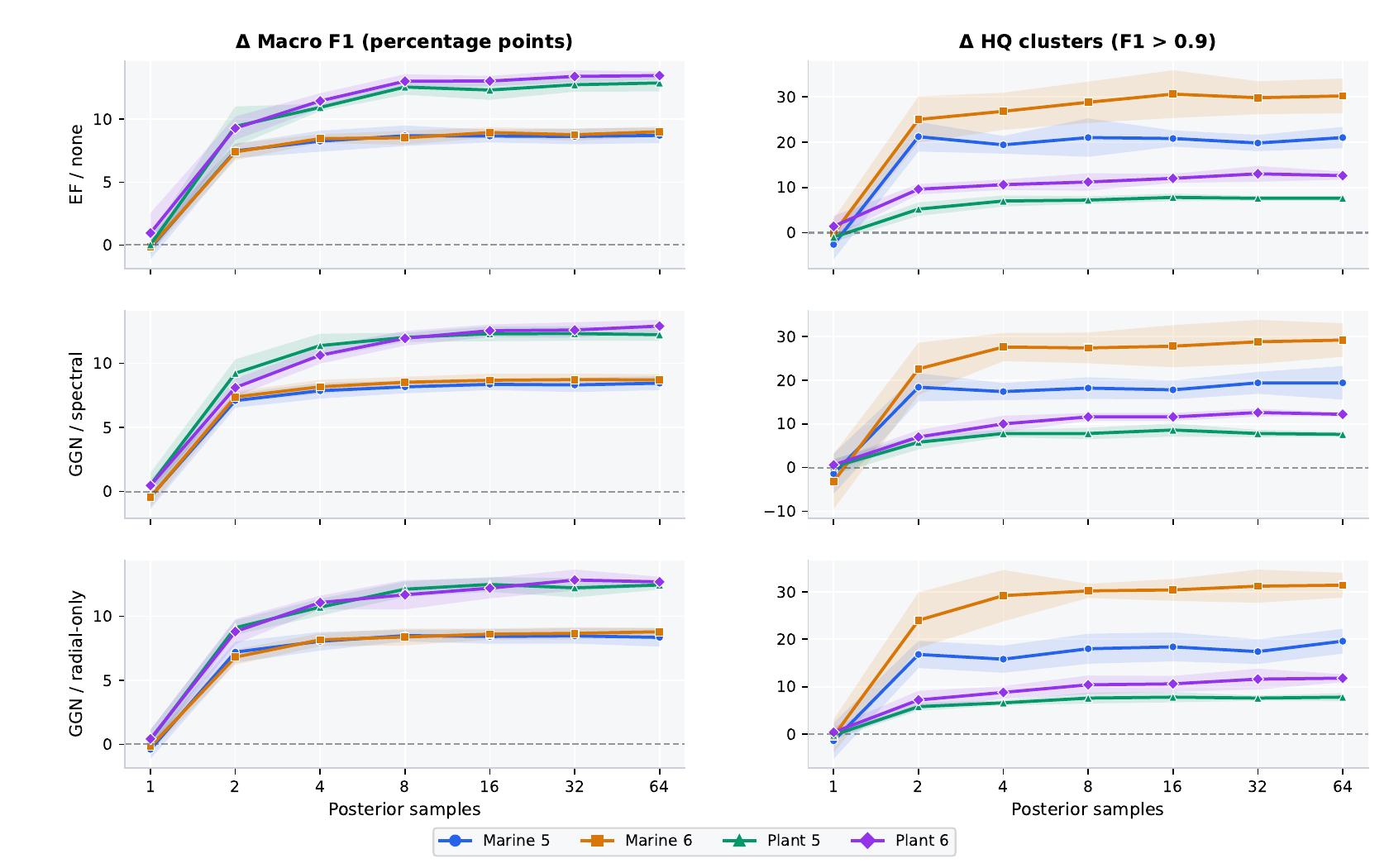}
    \caption{SemiBin2 posterior-sample ablation. We report the results as the difference between the performance of our proposal and that of the MAP solution.}
    \label{fig:clustering:posterior:sample:ablation}
\end{figure}

\begin{figure}
    \centering
    \includegraphics[width=\linewidth]{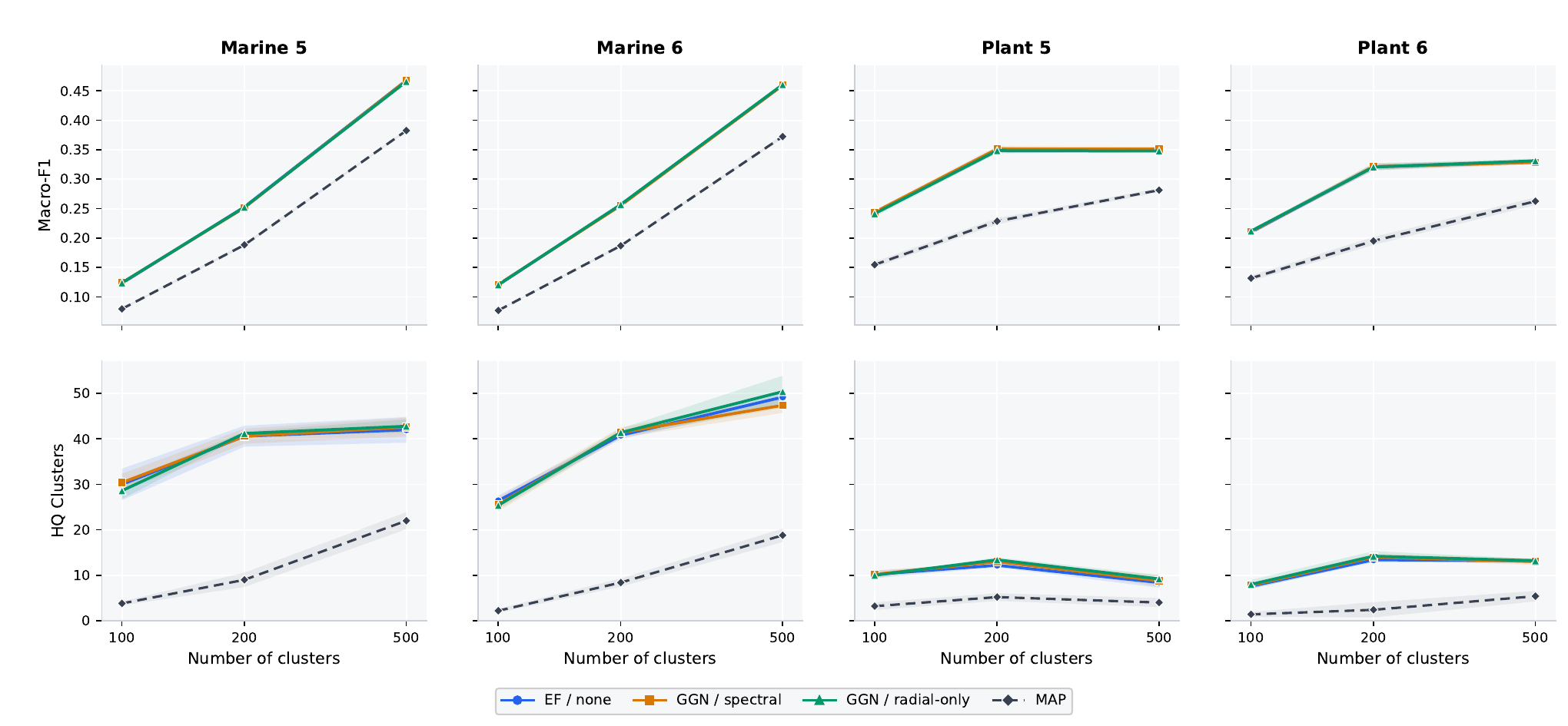}
    \caption{SemiBin2 clustering-$K$ ablation. Values are mean $\pm$ one standard deviation over five seeds at full coverage. MAP denotes standard SemiBin2 clustering and is shown once per $K$ using the EF/none run outputs.}
    \label{fig:semibin2-k-ablation}
\end{figure}

\end{document}